\DeclareMathOperator*{\argmin}{arg\,min}
\newtheorem{theorem}{Theorem}[section]
\newcommand{\sol}{u}
\newcommand{\nonlinearOps}{\mathcal N}
\newcommand{\governEqs}{f}
\newcommand{\InitCond}{\sol^{0}({x})}
\newcommand{\SpatialDomain}{\Omega}
\newcommand{\SpatialDim}{d}
\newcommand{\FinalTime}{T}
\newcommand{\solNN}{\tilde{\sol}}
\newcommand{\governEqsNN}{\tilde{\governEqs}}
\newcommand{\NNParams}{\Theta}
\newcommand{\loss}{L}
\newcommand{\lossSol}{\loss_\sol}
\newcommand{\lossEqs}{\loss_\governEqs}
\newcommand{\Number}{N}
\newcommand{\nTrainSol}{\Number_\sol}
\newcommand{\nTrainGovrnEqs}{\Number_\governEqs}
\newcommand{\Ttrain}{\FinalTime_{\text{train}}}
\newcommand{\Tval}{\FinalTime_{\text{val}}}
\newcommand{\Ttest}{\FinalTime_{\text{test}}}
\author{
    %Authors
    % All authors must be in the same font size and format.
    % Written by AAAI Press Staff\textsuperscript{\rm 1}\thanks{With help from the AAAI Publications Committee.}\\
    % AAAI Style Contributions by Pater Patel Schneider,
    Jungeun Kim\textsuperscript{\rm 1},  
    Kookjin Lee\textsuperscript{\rm 2},
    Dongeun Lee\textsuperscript{\rm 3},
    Sheo Yon Jin\textsuperscript{\rm 4},
    Noseong Park\textsuperscript{\rm 5}
    \\
}
\title{DPM: A Novel Training Method for Physics-Informed Neural Networks in Extrapolation}
\begin{document}
% \linenumbers
\maketitle

\begin{abstract}
\let\thefootnote\relax\footnotetext{Noseong Park is the corresponding author. This work was supported by the Institute of Information \& Communications Technology Planning \& Evaluation (IITP) grant funded by the Korea government (MSIT) (No. 2020-0-01361, Artificial Intelligence Graduate School Program (Yonsei University)).}
%  Solving PDEs means, when we know only about $\frac{\partial F}{\partial t}$, to find the original function $F$. This task is frequently challenging to accomplish. Therefore, we let a neural network learn and approximate $F$. PINN, a pioneering work, showed that their neural network can interpolate $F$. However, extrapolating $F$ is not solved yet. In this paper, we present a novel training method and a novel neural network architecture to extrapolate $F$.
We present a method for learning dynamics of complex physical processes described by time-dependent nonlinear partial differential equations (PDEs). Our particular interest lies in extrapolating solutions in time beyond the range of temporal domain used in training. Our choice for a baseline method is physics-informed neural network (PINN) [Raissi et al., J. Comput. Phys., 378:686--707, 2019] because the method parameterizes not only the solutions, but also the equations that describe the dynamics of physical processes. 
%which have shown a great success in solving time-dependent nonlinear PDEs without requiring costly numerical simulations of PDE problems. 
%and accurately approximating solutions within given ranges of spatial and temporal domains that are used in training.
%where both a governing equation and the solution are parameterized by the same set of network weights. 
We demonstrate that PINN performs poorly on extrapolation tasks in many benchmark problems. To address this, we propose a novel method for better training PINN and demonstrate that our newly enhanced PINNs can accurately extrapolate solutions in time. Our method shows up to 72\% smaller errors than existing methods in terms of the standard L2-norm metric.
\end{abstract}

\section{Introduction}\label{sec:intro}
Understanding dynamics of complex real-world physical processes is essential in many applications (e.g., fluid dynamics \cite{anderson2016computational,hirsch2007numerical}). Such dynamics are often modeled as time-dependent partial deferential equations (PDEs), where we seek a solution function $\sol(x,t)$ satisfying a governing equation,
\begin{linenomath}\begin{equation}\label{eq:nPDE}
    \governEqs(x,t) \stackrel{\text{def}}{=} \sol_t + \nonlinearOps(\sol) = 0,\, x\in\SpatialDomain,\,t\in[0,\FinalTime],
\end{equation}\end{linenomath}
where $\sol_t \stackrel{\text{def}}{=} \frac{\partial u}{\partial t}$ denotes the partial derivative of $\sol$ w.r.t. $t$, $\nonlinearOps$ denotes a nonlinear differential operator, $\SpatialDomain \subset \mathbb{R}^{\SpatialDim}$ ($\SpatialDim=1,2,3$) denotes a spatial domain, and $\FinalTime$ denotes the final time. Moreover, there are two more types of conditions imposed on the solution function $\sol(x,t)$: i) an initial condition $\sol(x, 0)=u^{0}(x) , \forall x \in \SpatialDomain$ and ii) a set of boundary conditions specifying the behaviors of the solution function on the boundaries of $\SpatialDomain$. Solving such problem becomes particularly challenging when the nonlinear differential operator is highly nonlinear.

Traditionally, classical numerical methods (e.g., \cite{iserles2009first,leveque2002finite,stoer2013introduction}) have been dominant choices for solving such nonlinear time-dependent PDEs, as they have demonstrated their effectiveness in solving complex nonlinear PDEs and provide sound theoretical analyses. However, they are often based on techniques that require complex, problem-specific knowledge such as sophisticated spatio-temporal discretization schemes. 

Recently, with the advancements in deep learning, many data-centric approaches, which heavily rely on the universal approximation theorem \cite{hornik1989multilayer}, have been proposed. %Most of them formulate the problem as a simple regression type problem that finds a nonlinear mapping from $(x,t)$ to $\sol(x,t)$, where the mapping is often modeled with a deep neural network. 
%Most approaches formulate the problem as a rather simple supervised learning problem,  e.g., learning dynamics of physical processed using deep neural networks (DNNs) as a block box \cite{ling2016reynolds,ling2015evaluation,vlachas2018data}, building a DNN-based surrogate model for solutions of parameterized PDEs \cite{geist2020numerical,khoo2017solving,tripathy2018deep}. 
Most approaches formulate the problem as a rather simple (semi-)supervised learning problem for constructing surrogate models for solution functions \cite{geist2020numerical,khoo2017solving,ling2016reynolds,ling2015evaluation,tripathy2018deep,vlachas2018data,Holl2020Learning}.
Although the formulation itself is simple, this approach requires costly evaluations or existence of solutions 
%at $(x,t)$ 
and also suffers from lack of ways to enforce a priori information of the problem such as physical laws described by the governing equation. There are also more ``physics-aware'' approaches such as methods based on learning latent-dynamics of physical processes \cite{erichson2019physics,fulton2019latent,lee2019deep,lee2020model,wiewel2019latent}. These approaches, however, still require computations of solutions to collect training dataset.

Among those data-centric approaches, a method called physics-informed neural network (PINN) \cite{raissi2019physics} has brought attention to the community because of its simple, but effective way of approximating time-dependent nonlinear PDEs with neural networks, while preserving important physical properties described by the governing equations. PINN achieves these by parameterizing the solution and the governing equation simultaneously with a set of shared network parameters, which we will elaborate in the next section. After the great success of the seminal paper~\cite{raissi2019physics}, many sequels have applied PINN to solve various PDE applications, e.g. \cite{cmc.2019.06641,
Yang2020BPINNsBP,ZHANG2019108850,10.1007/978-3-030-22747-0_15}. %However, it is still underexplored how we can better train PINNs.
% and developing advanced variants of PINNs [REFs] \KL{distributed versions/GAN version}.

Nearly all these studies, however, demonstrated the performances of their methods evaluated at a set of testing points sampled within a pre-specified range, i.e., $\{(x^i_{\text{test}},t^i_{\text{test}})\} \subset \SpatialDomain \times [0,\Ttrain] \setminus \{(x^i_{\text{train}},t^i_{\text{train}})\}$, which we denote by \textit{interpolation}. In this paper, however, we are more interested in assessing the capability of PINN as a tool for learning the dynamics of physical processes. In particular, we would like to assess the performance of PINN on a testing set sampled beyond the final training time $\Ttrain$ of the pre-specified range, i.e., $\{(x^i_{\text{test}},t^i_{\text{test}})\} \subset \SpatialDomain \times (\Ttrain, \FinalTime]$, where $\FinalTime > \Ttrain$, and we denote this task by \textit{extrapolation}. In principle, PINN is expected to learn the dynamics Eq.~\eqref{eq:nPDE} and, consequently, to approximate $\sol(x,t)$ in $(\Ttrain, \FinalTime]$ accurately if trained properly. However, in our preliminary study with a one-dimensional viscous Burgers' equation shown in Fig.~\ref{fig:pinn}, we observe that the accuracy of the approximate solution produced by PINN in the extrapolation setting is significantly degraded compared to that produced in the interpolation setting. 

Motivated by this observation, we analyze PINN in detail (Section \ref{sec:pinn}), propose our method to improve the approximation accuracy in extrapolation (Section \ref{sec:dpm}), and demonstrate the effectiveness of the proposed method with various benchmark problems (Section \ref{sec:exps}). In all benchmark problems, our proposed methods, denoted by PINN-D1 and D2, show the best accuracies with various evaluation metrics. In comparison with state-of-the-art methods, errors from our proposed methods are up to 72\% smaller.

\begin{figure}[t]
    \centering
    \subfigure[Interpolation]{\includegraphics[width=0.39\columnwidth]{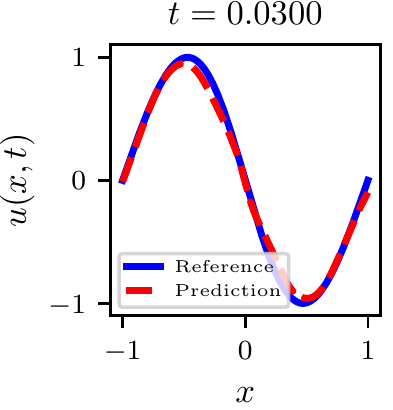}} \hspace{1cm}
    \subfigure[Extrapolation]{\label{fig:pinn_extra}\includegraphics[width=0.39\columnwidth]{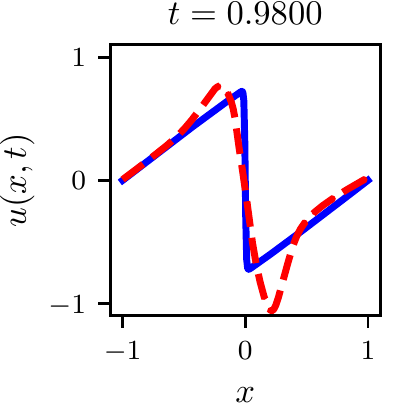}} 
    % \subfigure[Loss curve]{\includegraphics{images/Burgers-Adam mean of MSE f 10000.pdf}}
    \caption{1D viscous Burgers' equation examples. We train the PINN model~\cite{raissi2019physics} with $\Ttrain=0.5$ and report two solution snapshots of the reference solution (solid blue line) and the approximated solution (dashed red line) obtained by PINN at $t=0.03$ (i.e., interpolation) and $t=0.98$ (i.e., extrapolation).} 
    \label{fig:pinn}
\end{figure}

\section{Related Work and Preliminaries}\label{sec:pinn}
We now formally introduce PINN. Essentially, PINN parameterizes both the solution $\sol$ and the governing equation $\governEqs$. Let us denote a neural network approximation of the solution $\sol(x,t)$ by $\solNN(x,t;\NNParams)$, where $\NNParams$ denotes a set of network parameters. The governing equation $\governEqs$ is then approximated by a neural network $\governEqsNN(x,t,\solNN;\NNParams) \stackrel{\text{def}}{=} \solNN_t + \nonlinearOps(\solNN(x,t;\NNParams))$, where partial derivatives are obtained via automatic differentiation (or a back-propagation algorithm  \cite{rumelhart1986learning} to be more specific). That is, the neural network  $\governEqsNN(x,t,\solNN;\NNParams)$ shares the same network weights with $\solNN(x,t;\NNParams)$, but enforces physical laws by applying an extra problem-specific nonlinear activation defined by the PDE in Eq.~\eqref{eq:nPDE} (i.e., $\solNN_t + \nonlinearOps(\solNN)$), which leads to the name ``physics-informed'' neural network.\footnote{We also note that there are other studies (e.g.,  \cite{cranmer2020lagrangian,greydanus2019hamiltonian}) using the idea of parameterizing the governing equations, where derivatives are also computed using automatic differentiation.}

This construction suggests that these shared network weights can be learned via forming a loss function consisting of two terms, each of which is associated with approximation errors in $\solNN$ and $\governEqsNN$, respectively. In the original formulation, a loss function consisting of two error terms is considered:
\begin{linenomath}\begin{equation}\label{eq:loss}
    \loss \stackrel{\text{def}}{=} \alpha\loss_\sol + \beta\loss_\governEqs,
\end{equation}\end{linenomath}
where $\alpha,\beta \in \mathbb{R}$ are coefficients and $\loss_\sol, \loss_\governEqs$ are defined below.
\begin{linenomath}\begin{align}
\loss_\sol &= \frac{1}{\nTrainSol}\sum_{i=1}^{\nTrainSol} | \sol(x_\sol^i,t_\sol^i) - \solNN(x_\sol^i, t_\sol^i;\NNParams) |^2,\\ \loss_\governEqs &= \frac{1}{\nTrainGovrnEqs}\sum_{i=1}^{\nTrainGovrnEqs} | \governEqsNN(x_\governEqs^i,t_\governEqs^i,\solNN;\NNParams)|^2.
\end{align}\end{linenomath}

The first loss term, $\loss_\sol$, enforces initial and boundary conditions using a set of training data $\big\{\big((x_\sol^i,t_\sol^i),\sol(x_\sol^i,t_\sol^i)\big)\big\}_{i=1}^{\nTrainSol}$, where the first element of the tuple is the input to the neural network $\solNN$ and the second element is the ground truth that the output of $\solNN$ attempts to match. These data can be easily collected from specified initial and boundary conditions, which are known \textit{a priori} (e.g., $\sol(x,0)=\InitCond=-\sin(\pi x)$ in a PDE we use for our experiments).  The second loss term, $\loss_\governEqs$, minimizes the discrepancy between the governing equation $\governEqs$ and the neural network approximation $\governEqsNN$ evaluated at \textit{collocation points}, which forms another training dataset $\big\{\big((x_\governEqs^i,t_\governEqs^i), \governEqs(x_\governEqs^i,t_\governEqs^i)\big)\big\}_{i=1}^{\nTrainGovrnEqs}$, where the ground truth $\{\governEqs(x_\governEqs^i,t_\governEqs^i)\}_{i=1}^{\nTrainGovrnEqs}$ consists of all zeros.

The advantages of this loss construction are that i) no costly evaluations of the solutions $\sol(x,t)$ at collocation points are required to collect training data, ii) initial and boundary conditions are enforced by the first loss term $\loss_\sol$ where its training dataset can be easily generated, and iii) a physical law described by the governing equation $\governEqs$ in Eq.~\eqref{eq:nPDE} can be enforced by minimizing the second loss term $\loss_\governEqs$. In~\cite{raissi2019physics}, both the loss terms are considered equally important (i.e., $\alpha=\beta=1$), and the combined loss term $\loss$ is minimized.

\paragraph{Motivations.} If PINN can correctly learn a governing equation, its extrapolation should be as good as interpolation. Successful extrapolation will enable the adoption of PINN to many PDE applications. With the loss formulation in Eq.~\eqref{eq:loss}, however, we empirically found that it is challenging to train PINN for extrapolation as shown in Fig.~\ref{fig:pinn}. 

Hence, we first investigate training loss curves of $\lossSol$ and $\lossEqs$ separately: Fig.~\ref{fig:pinn2} depicts the loss curves $\lossSol$ and $\lossEqs$ of PINN trained for a 1D inviscid Burgers' equation. The figure shows that $\lossSol$ converges very fast, whereas $\lossEqs$ starts to fluctuate after a certain epoch and does not decrease below a certain value. From the observation, we can conclude that the initial and the boundary conditions are successfully enforced, whereas the dynamics of the physical process may not be accurately enforced, which, consequently, could lead to significantly less accurate approximations in extrapolation, e.g., Fig.~\ref{fig:pinn_extra}. Motivated by this observation, we propose a novel training method for PINN in the following section. In the experiments section, we demonstrate performances of the proposed training method. 

\begin{figure}[t]
    \subfigure[$\loss_u$ curve]{\includegraphics[height=2.7cm]{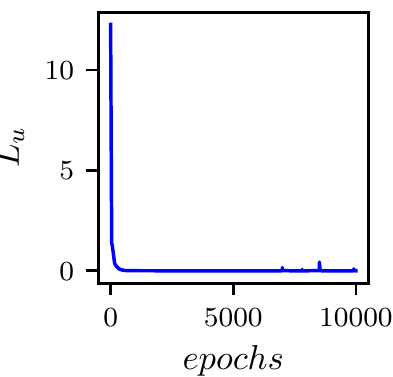}}\hfill
    \subfigure[$\loss_f$ curve]{\label{fig:losseqs}\includegraphics[height=2.7cm]{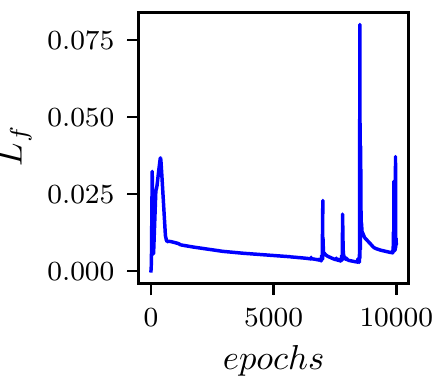}}\hfill
    \subfigure[Updating $\NNParams$ ]{\includegraphics[height=2.8cm]{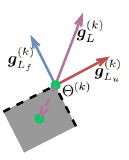}}
    \caption{Example training curves of $\loss_u$ and $\loss_f$ of PINN for a 1D inviscid Burgers' equation in (a) and (b) respectively, and an example of updating $\NNParams$ in (c)}
    \label{fig:pinn2}
\end{figure}

\section{Dynamic Pulling Method (DPM)}\label{sec:dpm}

The issue with training PINN, which we have identified in our preliminary experiments, is that $\lossEqs$ is fluctuating and is not decreasing. To resolve this issue, we propose a novel training method to impose a \emph{soft} constraint of $L_f \leq \epsilon$, where $\epsilon$ is a hyperparameter and can be set to an arbitrary small value to ensure an accurate approximation of the governing equation, i.e., enforcing $\governEqsNN(\cdot)$ to be close to zero. The proposed training concept is dynamically manipulating the gradients.

We dynamically manipulate the gradients of the loss terms on top of a gradient-based optimizer including but not limited to the gradient descent method, i.e., $\NNParams^{(k+1)} = \NNParams^{(k)} - \gamma\bm{g}^{(k)}$, where $\gamma$ is a learning rate, and $\bm{g}^{(k)}$ is a gradient at $k$-th epoch. %Our novel training method confines $L_f$ to a region $[0,\epsilon]$, where $\epsilon$ is set by user to a small enough value to ensure the correct approximation of a governing equation --- recall that the governing equation $f$ should be 0 (cf. Eq~\eqref{eq:nPDE}). 
%In other words, we 
We set the gradient $\bm{g}^{(k)}$ to one of the following vectors depending on conditions:
% to decrease $L_f$ as soon as the condition $L_f > \epsilon$ is met:
% \NNParams^{(k)} - \gamma 
% $\gamma \in \mathbb{R}$ is a learning rate, 
\begin{linenomath}\begin{align}\label{eq:adaptive}
    \bm{g}^{(k)} = \begin{cases} \bm{g}^{(k)}_{L_u} &\textrm{, if }L_f \leq \epsilon\\
    \bm{g}^{(k)}_{L} &\textrm{, if }L_f > \epsilon \wedge \bm{g}^{(k)}_{L_u} \cdot \bm{g}^{(k)}_{L_f} \geq 0,\\
    \bm{v} + \bm{g}^{(k)}_{L} &\textrm{, otherwise}\end{cases}
\end{align}\end{linenomath}where $\bm{v} \in \mathbb{R}^{\dim(\NNParams)}$ is a manipulation vector, which we will show how to calculate shortly; $\bm{g}^{(k)}_{L_u}$, $\bm{g}^{(k)}_{L_f}$, and $\bm{g}^{(k)}_{L}$ denote the gradients of $L_u$, $L_f$, and $L$, respectively.

Here, we care only about $\bm{g}^{(k)}_{L_u}$, when $L_f$ is small enough, i.e., $L_f \leq \epsilon$, because $L_f$ already satisfies the constraint. There are two possible cases when $L_f > \epsilon$: i) $\bm{g}^{(k)}_{L_u} \cdot \bm{g}^{(k)}_{L_f} \geq 0$ and ii) $\bm{g}^{(k)}_{L_u} \cdot \bm{g}^{(k)}_{L_f} < 0$. In the former case where the two gradient terms $\bm{g}^{(k)}_{L_u}$ and $\bm{g}^{(k)}_{L_f}$ have the same direction (i.e., the angle between them is less than $90^{\circ}$ and hence their dot-product is positive), performing a gradient descent update with $\bm{g}^{(k)}_{L}$ guarantees a decrease in $L_f$. In Fig.~\ref{fig:pinn2} (c), for instance, both $L_f$ and $L_u$ decrease if $\NNParams^{(k)}$ is updated into the gray area.

% Note that we focus only on $L_u$ when $L_f \leq \epsilon$ is enough small. With $\bm{g}^{(k)}$, we can update the parameter $\NNParams$, e.g., the gradient descent does $\NNParams^{(k+1)} = \NNParams^{(k)} - \gamma \bm{g}^{(k)}$, where $\gamma \in \mathbb{R}$ is a learning rate. Adam does in a more complicated way.

When $L_f > \epsilon$ and $\bm{g}^{(k)}_{L_u} \cdot \bm{g}^{(k)}_{L_f} < 0$, however, $\bm{v}$ carefully manipulates the gradient in such a way that $L_f$ is guaranteed to decrease after a gradient descent update.
% By maximizing $\|\bm{v}\cdot\bm{g}^{(k)}_{L_u}\|_2^2$, we can disturb the original updating process as little as possible.

We now seek such a solution $\bm{v}$ that will result in $\big(\bm{v}+\bm{g}^{(k)}_{L}\big) \cdot \bm{g}^{(k)}_{L_f} > 0$ given $\bm{g}^{(k)}_{L}$ and $\bm{g}^{(k)}_{L_f}$. Because the dot-product is distributive, it satisfies the following condition
\begin{linenomath}\begin{align}
\big(\bm{v}+\bm{g}^{(k)}_{L}\big) \cdot \bm{g}^{(k)}_{L_f} = \bm{v}\cdot \bm{g}^{(k)}_{L_f} + \bm{g}^{(k)}_{L} \cdot \bm{g}^{(k)}_{L_f} > 0,
\end{align}\end{linenomath} which can be re-formulated as follows:
\begin{linenomath}\begin{align}\label{eq:prob}
\bm{v}\cdot \bm{g}^{(k)}_{L_f} + \bm{g}^{(k)}_{L} \cdot \bm{g}^{(k)}_{L_f} = \delta,
% \bm{v}\cdot \bm{g}^{(k)}_{L_f} > -\bm{g}^{(k)}_{L} \cdot \bm{g}^{(k)}_{L_f}.
\end{align}\end{linenomath}where $\delta > 0$ is to control how much we pull $\NNParams^{(k)}$ toward the region where $L_f$ decreases, e.g., the gray region of Fig.~\ref{fig:pinn2} (c).
% Maximizing $\|\bm{v}\cdot\bm{g}^{(k)}_{L_u}\|_2^2$ with Eq.~\eqref{eq:prob} as a constraint can be analytically solved.
% Finding the smallest $\bm{v}$ that meets Eq.~\eqref{eq:prob} can be formulated as follows:
% \begin{linenomath}\begin{align}\begin{split}\label{eq:prob2}
% \argmin_{\bm{v}}\quad& \|\bm{v}\|^2_2,\\
% \textrm{subject to}\quad& \bm{v}\cdot \bm{g}^{(k)}_{L_f} = -\bm{g}^{(k)}_{L} \cdot \bm{g}^{(k)}_{L_f} + \delta,
% \end{split}\end{align}\end{linenomath}where $\delta > 0$ is a hyperparameter to control how much we pull $\NNParams^{(k)}$ toward the region where $L_f$ decreases, e.g., the gray region of Fig.~\ref{fig:pinn2} (c).

We note that Eq.~\eqref{eq:prob} has many possible solutions. Among them, one solution, denoted $\bm{v}^* = \frac{-\bm{g}^{(k)}_{L} \cdot \bm{g}^{(k)}_{L_f} + \delta}{\| \bm{g}^{(k)}_{L_f} \|^2_2} \bm{g}^{(k)}_{L_f}$, can be computed by using the \emph{pseudoinverse} of $\bm{g}^{(k)}_{L_f}$, which  is widely used to find such solutions, e.g., the analytic solution of linear least-squared problems arising in linear regressions.

A good characteristic of the pseudoinverse is that it minimizes $\|\bm{v}\|_2^2$~\cite{ben2006generalized}. By minimizing $\|\bm{v}\|_2^2$, we can disturb the original updating process as little as possible. Therefore, we use the pseudoinverse-based solution in our method.

Despite its advantage, the gradient manipulation vector $\bm{v}^*$ sometimes requires many iterations until $L_f \leq \epsilon$. To expedite the pulling procedure, we also dynamically control the additive pulling term $\delta$ as follows:
\begin{linenomath}\begin{align}
    \Delta^{(k)} &= L_f(\NNParams^{(k)}) - \epsilon,\\
    \delta^{(k+1)} &= \begin{cases}w\delta^{(k)},\textrm{ if }\Delta^{(k)} > 0,\\
    \frac{\delta^{(k)}}{w},\textrm{ if }\Delta^{(k)} \leq 0,\end{cases}\label{eq:cnt}
\end{align}\end{linenomath}where $w > 1$ is an inflation factor for increasing $\delta$.

\begin{table*}
\setlength{\tabcolsep}{2pt}
\centering
\scriptsize 
\caption{The extrapolation accuracy in terms of the relative errors in the L2-norm, the explained variance error, the max error, and the mean absolute error in various PDEs. Large (resp. small) values are preferred for $\uparrow$ (resp. $\downarrow$).}\label{tbl:all}
\begin{tabular}{|c|c|c|c|c|c|c|c|c|c|c|c|c|c|c|c|c|}
\hline
\multirow{2}{*}{PDE} & \multicolumn{4}{c|}{L2-norm $(\downarrow)$} & \multicolumn{4}{c|}{Explained variance score $(\uparrow)$} & \multicolumn{4}{c|}{ Max error $(\downarrow)$} & \multicolumn{4}{c|}{ Mean absolute error $(\downarrow)$} \\ \cline{2-17}
          & PINN & PINN-R & PINN-D1 & PINN-D2 & PINN & PINN-R & PINN-D1 & PINN-D2 & PINN & PINN-R & PINN-D1 & PINN-D2 & PINN & PINN-R & PINN-D1 & PINN-D2  \\ \hline
Vis. Burgers & 0.329 & 0.333 & 0.106 & \textbf{ 0.092}  & 0.891  & 0.901 & 0.988 & \textbf{0.991} & 0.657 & 1.081 & 0.545 & \textbf{0.333} & 0.085 &0.108 & 0.026 & \textbf{0.021} \\ \hline
Inv. Burgers & 0.131   & 0.095 & \textbf{0.083} & 0.090  & 0.214 & 0.468 &0.485 & \textbf{0.621}  & 3.088 & 2.589 & \textbf{1.534} & 2.036 & 0.431 & 0.299 & \textbf{0.277} & 0.315 \\ \hline
Allen--Cahn &  0.350 & 0.286 & 0.246 & \textbf{0.182} & 0.090  & 0.919 & 0.939 & \textbf{0.967} & 1.190 & 1.631 & 1.096 & \textbf{0.836} & 0.212 & 0.142 & 0.129 & \textbf{0.094} \\ \hline
Schr\"{o}dinger & 0.239  &0.212  & 0.314  & \textbf{0.141}  & -4.364 & -3.902 & -4.973 & \textbf{-3.257} & 4.656 & 4.222 & 4.945 & \textbf{3.829} & 0.954 & 0.894  & \textbf{0.868} & 0.896 \\ \hline
\end{tabular}
\end{table*}

% \begin{table}[]
% \centering
% % \footnotesize
% \caption{The extrapolation accuracy in terms of the Pearson correlation coefficient }\label{tbl:pcc}
% \begin{tabular}{|c|c|c|c|}
% \hline
%  PDE         & PINN & PINN-R & PINN-DPM \\ \hline
% Viscous Burgers' equation & 0.9175 & 0.9315 & 0.9931 \\ \hline
% Inviscid Burgers' equation & 0.1324 & 0.4236 & 0.5613 \\ \hline
% Schr\"{o}dinger equation & 0.9128 & 0.8566 & 0.9166 \\ \hline
% Allen--Cahn equation & 0.9339 & 0.9594 & 0.9748 \\ \hline
% \end{tabular}
% \end{table}

\begin{figure}[t]
    \centering
    \subfigure[Reference Solution]{\label{fig:visc_pinn_ref}\includegraphics[width=0.23\textwidth]{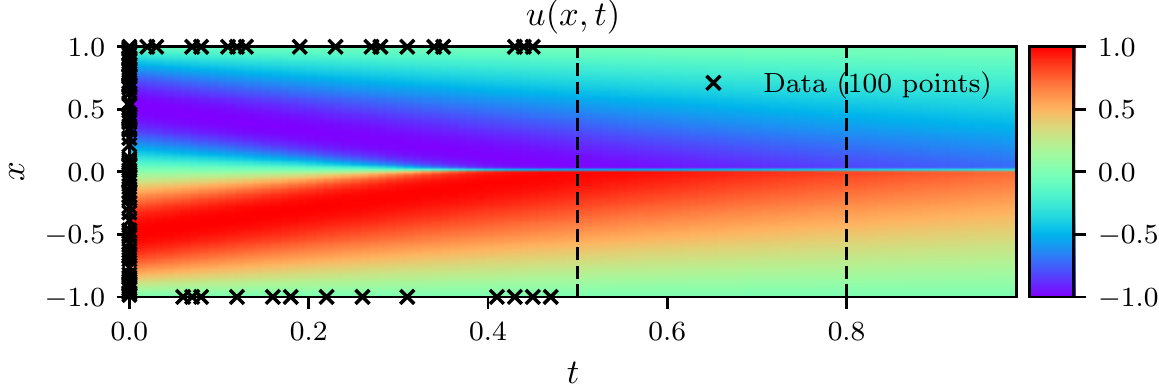}}\hfill
    \subfigure[PINN]{\label{fig:visc_pinn_heatmap}\label{fig:visc_burg_pinn}\includegraphics[width=0.23\textwidth]{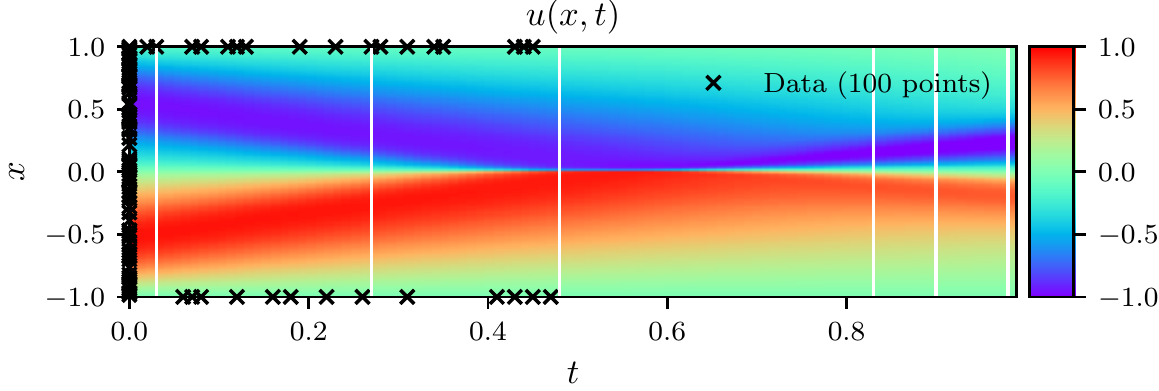}}\\
    \subfigure[PINN-R]{\label{fig:visc_burg_pinnr}\includegraphics[width=0.23\textwidth]{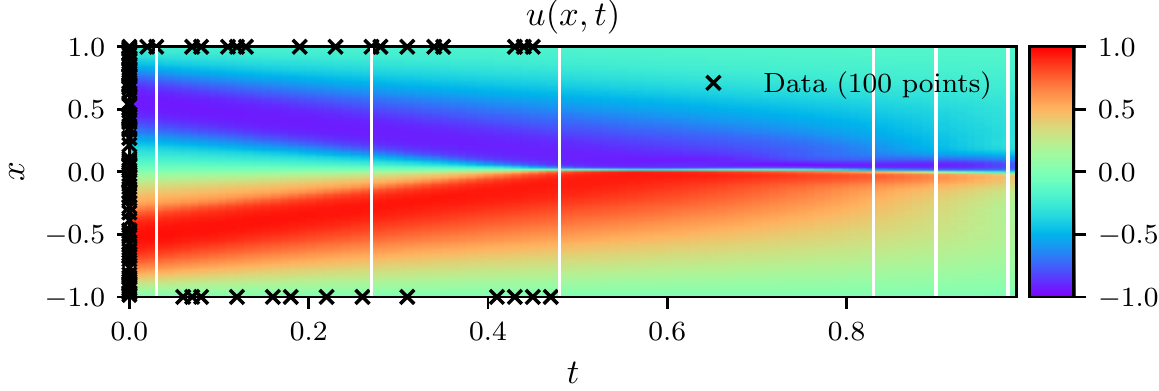}}\hfill
    \subfigure[PINN-D2]{\label{fig:visc_pinndpm_heatmap}\includegraphics[width=0.23\textwidth]{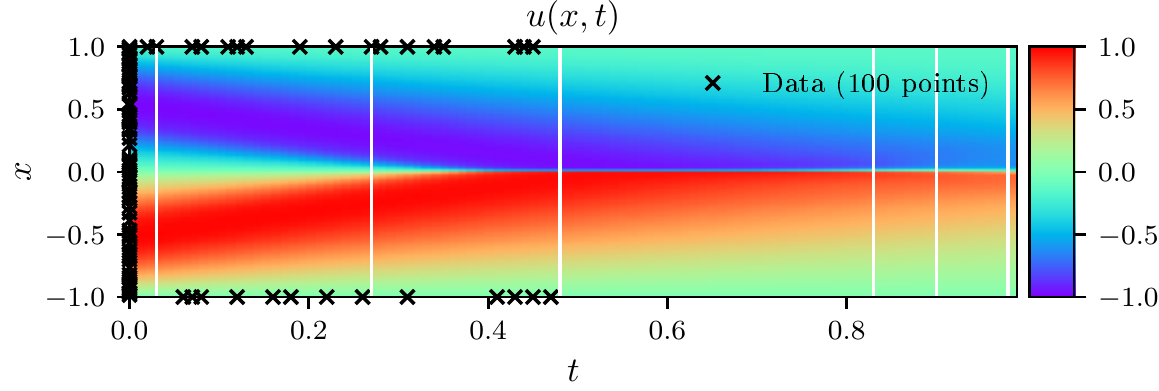}}\\
    \subfigure[PINN]{\label{fig:visc_burg_snap1}\includegraphics[height=2.5cm, width=2.5cm]{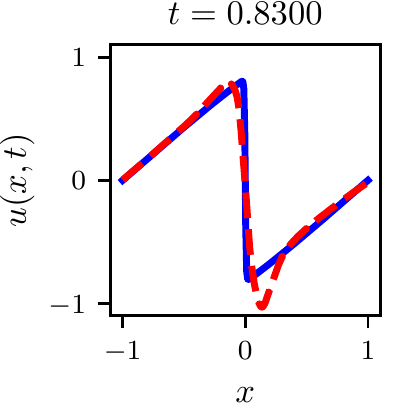}} \hfill
    \subfigure[PINN-R]{\includegraphics[height=2.5cm, width=2.5cm]{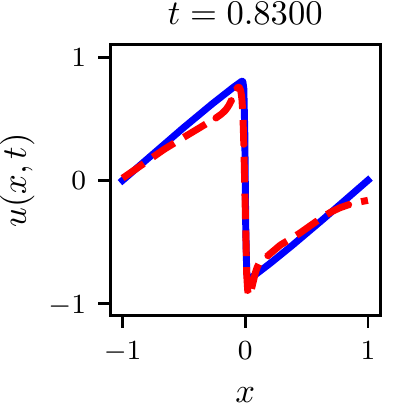}}\hfill
    \subfigure[PINN-D2]{\includegraphics[height=2.5cm, width=2.5cm]{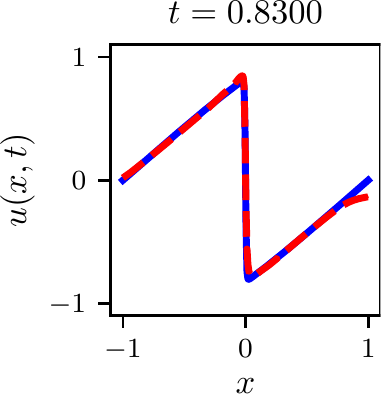}}\\
    \subfigure[PINN]{\includegraphics[height=2.5cm, width=2.5cm]{images/Viscous_1/Burgers-Adam_0.9800_graph_6-40-0.0010-1-7950.pdf}} \hfill
    \subfigure[PINN-R]{\includegraphics[height=2.5cm, width=2.5cm]{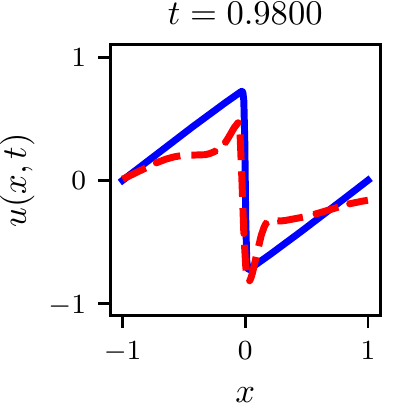}}\hfill
    \subfigure[PINN-D2]{\label{fig:visc_burg_snap6}\includegraphics[height=2.5cm, width=2.5cm]{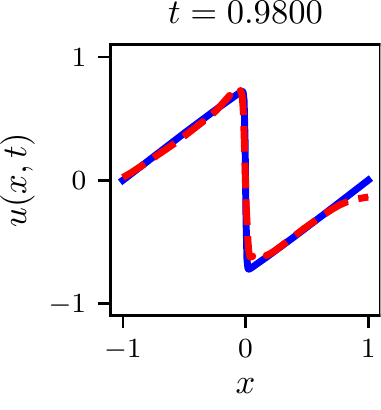}}
    \caption{Top two rows: the complete reference solution and predictions of the benchmark viscous Burgers' equation. The points marked with $\times$ mean initial or boundary points. Bottom: the solution snapshots at $t=\{0.83,0.98\}$ obtained via the extrapolation. In Fig.~\ref{fig:visc_pinn_ref}, the black vertical lines correspond to $\Ttrain$ and $\Tval$, respectively, and in Figs.~\ref{fig:visc_pinn_heatmap}--\ref{fig:visc_pinndpm_heatmap}, the white vertical lines correspond to time indices, where we extract solution snapshots. We refer readers to Appendix for more solution snapshots. The meanings of the vertical lines remain the same in the following figures.} 
    \label{fig:burgers}
\end{figure}

\section{Experiments}\label{sec:exps}
We describe our experimental environments and results with four benchmark time-dependent nonlinear PDEs and several different neural network designs. Our software and hardware environments are as follows: \textsc{Ubuntu} 18.04 LTS, \textsc{Python} 3.6.6, \textsc{Numpy} 1.18.5, \textsc{Scipy} 1.5, \textsc{Matplotlib} 3.3.1, \textsc{TensorFlow-gpu} 1.14, \textsc{CUDA} 10.0, and \textsc{NVIDIA} Driver 417.22, i9 CPU, and \textsc{NVIDIA RTX Titan}.

\subsection{Experimental Environments}
\paragraph{PDEs.} We consider viscous and inviscid Burgers' equations, nonlinear Schr\"{o}dinger equation (NLS), and Allen--Cahn (AC) equation. We refer readers to our supplementary material for detailed descriptions for these equations.

For training/validating/testing, we divide the entire time domain $[0,T]$ into three segments: $[0,\Ttrain]$, $(\Ttrain, \Tval]$, and $(\Tval, \Ttest]$, where $T=\Ttest > \Tval > \Ttrain > 0$. In other words, our task is to predict the solution functions of the PDEs in a future time frame, i.e., extrapolation. We use $\Ttrain = \frac{T}{2}$, $\Tval = \frac{4T}{5}$, and $\Ttest = T$, i.e., extrapolating for the last 20\% of the time domain, which is a widely used setting in many time-series prediction studies~\cite{KIM2003307,10.1145/2835776.2835834}.

\paragraph{Baselines.} Our task definition is not to simply approximate a solution function $u$ with a regression model but to let a neural network learn physical dynamics without costly collections of training samples (see our broader impact statement to learn why it is important to train without costly collections of training samples). For this task, the  state-of-the-art method is PINN. We compare our method with the following baselines: i) the original PINN which uses a series of fully-connected and hyperbolic tangent layer, denoted by PINN, and ii) PINN improved with the residual connection~\cite{DBLP:conf/cvpr/HeZRS16}, denoted by PINN-R. We apply our DPM with (resp. without) controlling $\delta$ in Eq.~\eqref{eq:cnt} to train PINN-R, denoted by PINN-D2 (resp. PINN-D1).
%\footnote{We restrict the baseline of comparisons to PINN and PINN-R because we only consider scenarios where the evaluations of the solution function at collocation points are very costly. Other approaches reviewed in Section \ref{sec:intro} do not fall into this category and, thus, we do not consider them as our baselines.}

\paragraph{Evaluation Metrics.} For performance evaluation, we collect predicted solutions  at testing data instances to construct a solution vector $\bm{\solNN} = [\solNN(x^1_{\text{test}},t^1_{\text{test}};\NNParams),\solNN(x^2_{\text{test}},t^2_{\text{test}};\NNParams),\ldots]^{\top}$, where $\{(x^i_{\text{test}},t^i_{\text{test}})\}$ is a set of testing samples. $x^i_{\text{test}}$ is sampled at a uniform spatial mesh grid in $\SpatialDomain$ and $t^i_{\text{test}}$ is on a uniform temporal grid in $(\Tval, \Ttest]$. See Appendix for how to build testing sets. For the comparison, we also collect the reference solution vector, denoted $\bm{\sol}$, at the same testing data instances by solving the same PDEs using traditional numerical solvers.
%, where each column $\solVecNN_i$ comprises of the values of $\solNN(x,t_i)$ evaluated at 
As evaluation metrics, we use the standard relative errors in L2-norm, i.e., ${\|\bm{\solNN}-\bm{\sol}\|_2}/{\|\bm{\sol}\|_2}$, the explained variance score, the max error, and the mean absolute error, each of which shows a different aspect of performance. Moreover, we report snapshots of the reference and approximate solutions at certain time indices.

%As in PINN, we use the following normalized L2-norm as our main evaluation criterion in conjunction with human visual evaluation: $\frac{\|\bm{\solNN}\|_2}{\|\bm{\sol}\|_2}$, where $\bm{\solNN}$ (resp. $\bm{\solNN}$) is a ground-truth (resp. predicted) vector of the PDE solution with testing $(t,x)$ pairs, where $t \in [\Tval, \Ttest]$ and $x \in \SpatialDomain$ are uniformly sampled. To collect the testing data, we use computer simulators as in PINN.

\paragraph{Hyperparameters.} For all methods, we test with the following hyperparameter configurations: the number of layers is $\{2,3,4,5,6,7,8\}$, the dimensionality of hidden vector is $\{20, 40, 50, 100, 150\}$. For PINN and PINN-R, we use $\alpha=\{1, 10, 100, 1000\}$, $\beta=\{1, 10, 100, 1000\}$ --- we do not test the condition of $\alpha=\beta$, except for $\alpha=\beta=1$. Our DPM uses $\alpha = \beta = 1$. The learning rate is $\{\text{1e-3, 5e-3, 1e-4, 5e-5}\}$ with various standard optimizers such as Adam, SGD, etc. For the proposed DPM, we test with $\epsilon=\{0.001, 0.005, 0.01, 0.0125\}$, $\delta=\{0.01, 0.1, 1, 10\}$, and $w=\{1.001, 1.005, 1.01, 1.025\}$. We also use the early stopping technique using the validation error as a criterion. If there are no improvements in validation loss larger than 1e-5 for the past 50 epochs, we stop the training process. We choose the model that best performs on the validation set.

%We use the validating $(x,t)$ pairs, where $t \in [\Ttrain, \Tval]$, to select a model in each equation. 

\paragraph{Train \& Test Set Creation.}
To build testing sets, $x^i_{\text{test}}$ is sampled at a uniform spatial mesh grid in $\SpatialDomain$ and $t^i_{\text{test}}$ is on a uniform temporal grid in $(\Tval, \Ttest]$. We use a temporal step size of $0.01$, $0.0175$, $\frac{0.01\pi}{2}$, and $0.005$ for the viscous Burgers' equation, the inviscid Burgers' equation, the NLS equation, and the AC equation, respectively. We divide $\SpatialDomain$ into a grid of 256, 512, 256, and 256 points for the viscous Burgers' equation, the inviscid Burgers' equation, the NLS equation, and the AC equation, respectively.

For creating our training sets, we use $\nTrainSol=100$ initial and boundary tuples for all the benchmark equations. For $\nTrainGovrnEqs$, we use 10K for the viscous and the inviscid Burgers' equations, and 20K for the NLS equation and the AC equation.

\subsection{Experimental Results}

Table~\ref{tbl:all} summarizes the overall performance for all benchmark PDEs obtained by PINN, PINN-R, PINN-D1, and PINN-D2. PINN-R shows smaller L2-norm errors than PINN. The proposed PINN-D2 significantly outperforms PINN and PINN-R in all four benchmark problems for all metrics. For the viscous Burgers' equation and the AC equation, PINN-D2 demonstrates 72\% and 48\% (resp. 72\% and 36\%) improvements over PINN (resp. PINN-R) in terms of the relative L2-norm, respectively.

\paragraph{Viscous Burgers' equation.} Fig.~\ref{fig:burgers} shows the reference solution and  %we obtain by using a computer software and many 
predictions made by PINN and the PINN variants of the viscous Burgers' equation. In Figs.~\ref{fig:visc_burg_pinn}--\ref{fig:visc_burg_pinnr}, both PINN and PINN-R fail to correctly learn the governing equation and their prediction accuracy is significantly degraded as $t$ increases. However, the proposed PINN-D2 shows much more accurate prediction even when $t$ is close to the end of the time domain. These results %intuitively describe why we need to focus on
explain that learning a governing equation correctly helps accurate extrapolation. %Even when
Although PINN and PINN-R are able to learn the initial and boundary conditions accurately, their extrapolation performances are poor because they fail to learn the governing equation accurately. % (see Fig.~\ref{fig:losseqs}). 
Figs.~\ref{fig:visc_burg_snap1}--\ref{fig:visc_burg_snap6} report solution snapshots at $t=\{0.83,0.98\}$ and we observe that the proposed PINN-D2 outperforms the other two PINN methods. Only PINN-D2 accurately enforces the prediction around $x=0$ in Fig.~\ref{fig:visc_burg_snap6}. PINN-D1 is comparable to PINN-D2 in this equation according to Table~\ref{tbl:all}.
% The first condition of Eq.~\eqref{eq:adaptive} accounts for this result: when $\lossEqs$ is sufficiently small, the update performed by DPM further decreases $\lossSol$.
%In the snapshots in Figs.~\ref{fig:burgers} (e) to (g), all methods show reasonable extrapolation results when $t$ is close to $\Tval$. In Figs.~\ref{fig:burgers} (h) to (j), however, only our PINN-DPM shows a reasonable prediction.

\begin{figure}[t]
    \centering
    \subfigure[Reference Solution]{\includegraphics[width=0.23\textwidth]{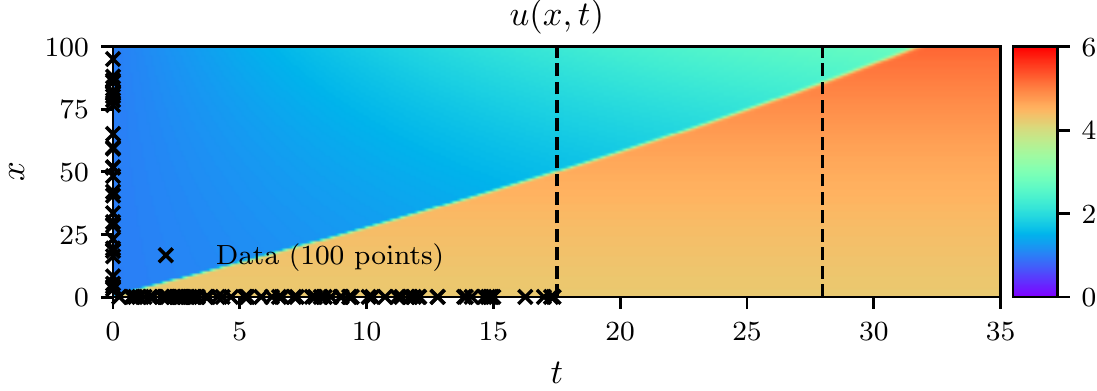}} \hfill
    \subfigure[PINN]{\includegraphics[width=0.23\textwidth]{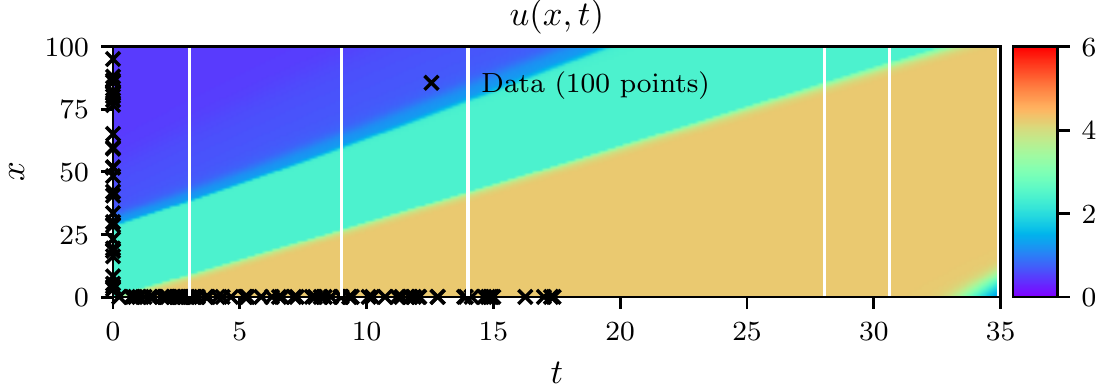}}\\
    \subfigure[PINN-R]{\includegraphics[width=0.23\textwidth]{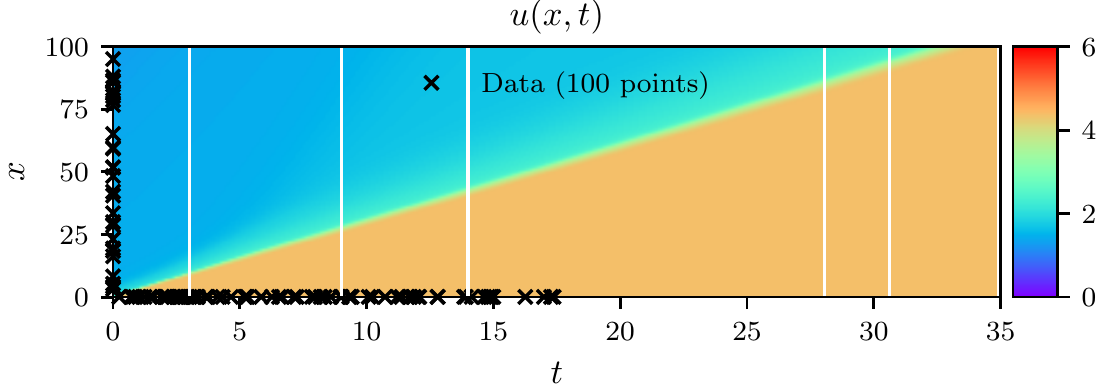}}\hfill
    \subfigure[PINN-D2]{\includegraphics[width=0.23\textwidth]{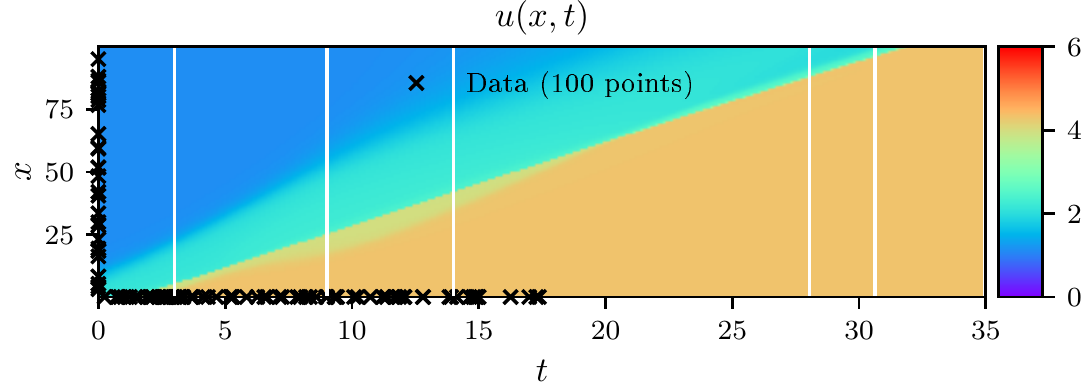}}\\
    \subfigure[PINN]{\includegraphics[height=2.5cm, width=2.5cm]{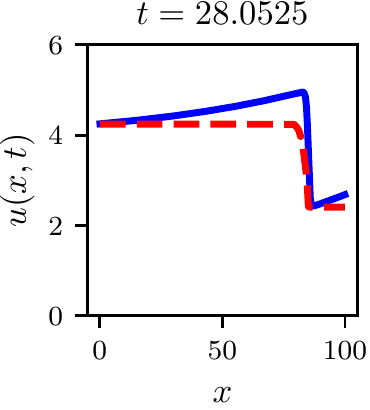}} \hfill
    \subfigure[PINN-R]{\includegraphics[height=2.5cm, width=2.5cm]{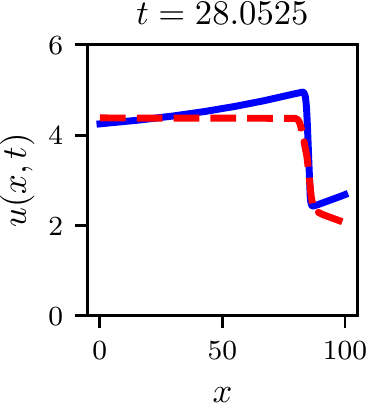}}\hfill
    \subfigure[PINN-D2]{\label{fig:invisc_burg_snap1}\includegraphics[height=2.5cm, width=2.5cm]{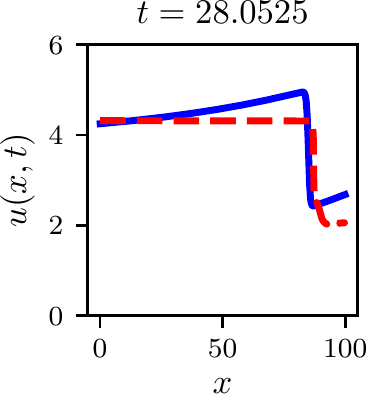}}\hfill
    \subfigure[PINN]{\includegraphics[height=2.5cm, width=2.5cm]{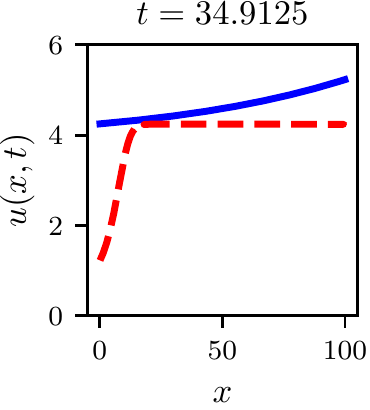}} \hfill
    \subfigure[PINN-R]{\includegraphics[height=2.5cm, width=2.5cm]{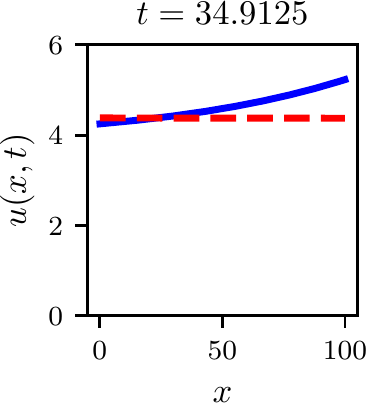}}\hfill
    \subfigure[PINN-D2]{\label{fig:invisc_burg_snap2}\includegraphics[height=2.5cm, width=2.5cm]{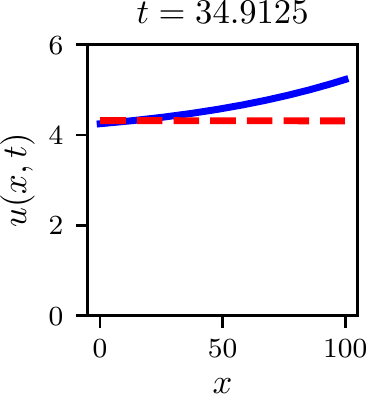}}
    \caption{Top two rows: the complete reference solution and predictions of the benchmark inviscid Burgers' equation. The points marked with $\times$ mean initial or boundary points. Bottom: the solution snapshots at $t=\{28.0875,34.9125\}$ obtained via the extrapolation.} 
    \label{fig:burgers2}
\end{figure}

\paragraph{Inviscid Burgers' equation.} In this benchmark problem, we consider the inviscid Burgers' equation posed on a very long time domain $[0, 35]$, which is much larger than those of other benchmark problems and could make the extrapolation task even more challeging. Fig.~\ref{fig:burgers2} reports the results obtained by the PINN variants along with the reference solution. All the three methods, PINN-R, PINN-D1, and PINN-D2, are comparable in this benchmark problem. However, we can still observe that PINN-D2 produces slightly more accurate predictions than other methods at $x=0$, the boundary condition. The first condition of Eq.~\eqref{eq:adaptive} accounts for this result: when $\lossEqs$ is sufficiently small, the update performed by DPM further decreases $\lossSol$ to improve the predictions in the initial and boundary conditions.
% In particular, only PINN-DPM accurately enforces the boundary condition at $x=0$ and captures the location of the discontinuity at $x\approx 75$ in Fig.~\ref{fig:invisc_burg_snap1}. 
%The time domain $[0,35]$ of the benchmark inviscid Burgers' equation is much larger than that of other equations. However, its initial and boundary conditions are rather simple in comparison with those in other equations. We found that PINN does not accurately approximate it. In Fig.~\ref{fig:burgers2} (b), PINN's prediction is a too much smoothed version of the reference solution. PINN-R shows a better prediction than PINN, and our method in Fig.~\ref{fig:burgers2} (d) shows the best prediction. The snapshots in other figures also support it. In Figs.~\ref{fig:burgers2} (e) and (f), PINN and PINN-R do not capture the bounce around $x=75$. PINN-DPM shows a better prediction for it. The same results can be found in Figs.~\ref{fig:burgers2} (h) to (j).

\begin{figure}[t]
    \centering
    \subfigure[Reference Solution]{\includegraphics[width=0.23\textwidth]{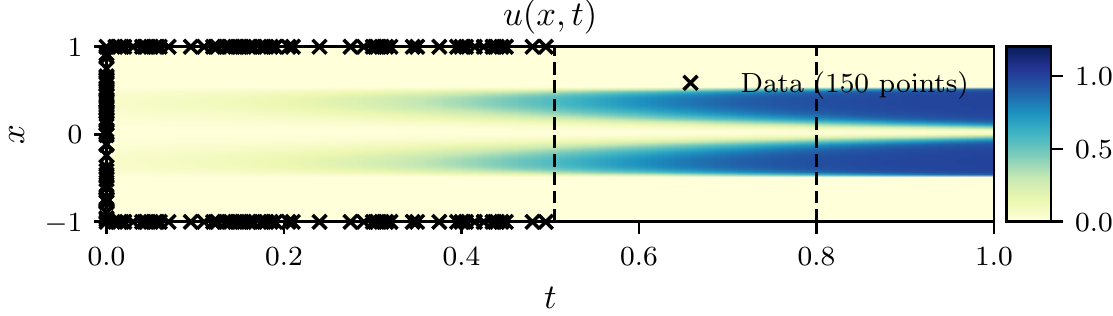}} \hfill
    \subfigure[PINN]{\includegraphics[width=0.23\textwidth]{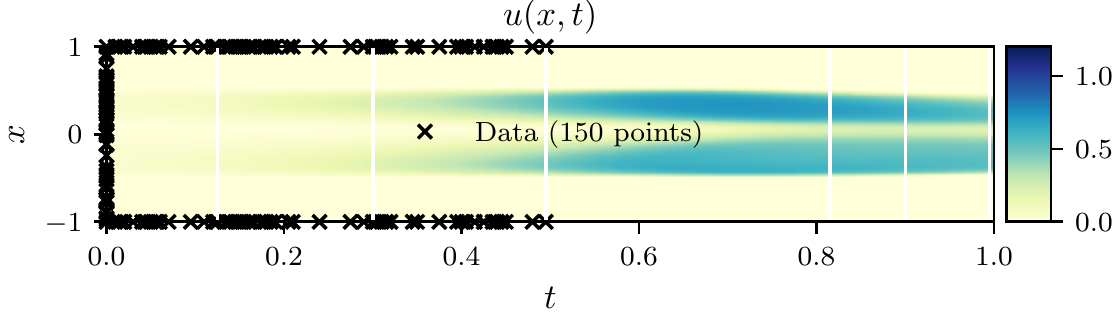}}\\
    \subfigure[PINN-R]{\includegraphics[width=0.23\textwidth]{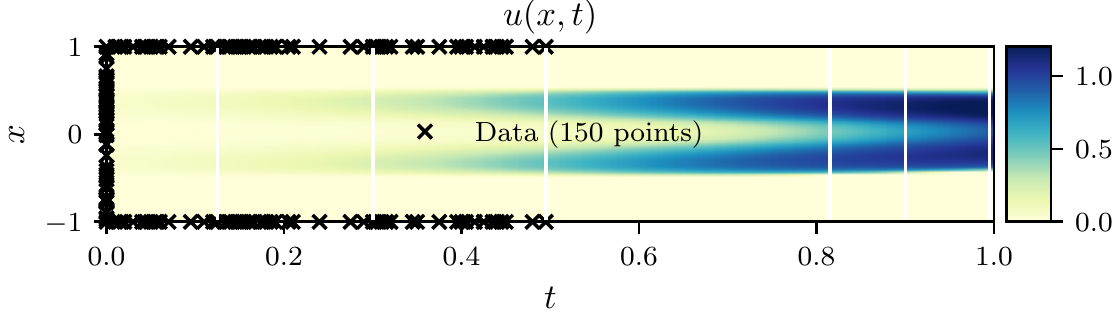}}\hfill
    \subfigure[PINN-D2]{\includegraphics[width=0.23\textwidth]{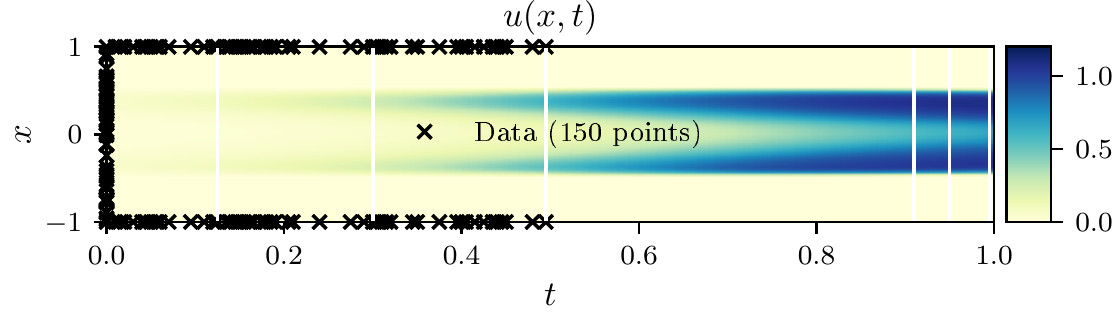}}\\
    \subfigure[PINN]{\label{fig:ac_snap1}\includegraphics[height=2.5cm, width=2.5cm]{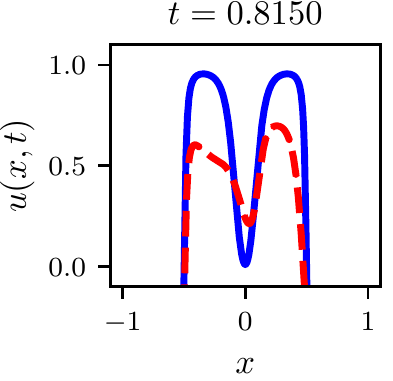}} \hfill
    \subfigure[PINN-R]{\includegraphics[height=2.5cm, width=2.5cm]{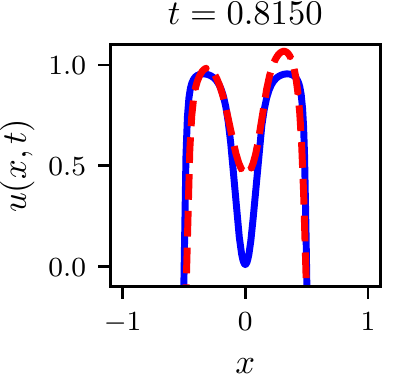}}\hfill
    \subfigure[PINN-D2]{\includegraphics[height=2.5cm, width=2.5cm]{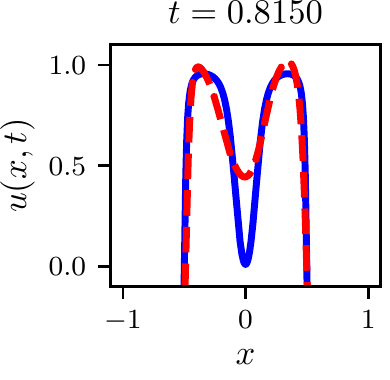}}\hfill
    \subfigure[PINN]{\includegraphics[height=2.5cm, width=2.5cm]{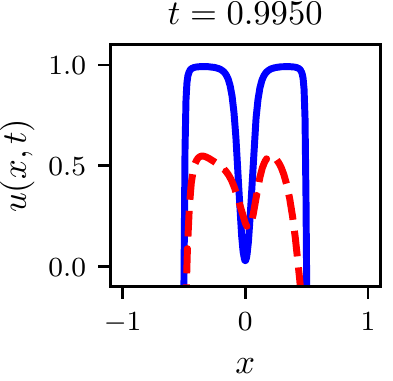}} \hfill
    \subfigure[PINN-R]{\includegraphics[height=2.5cm, width=2.5cm]{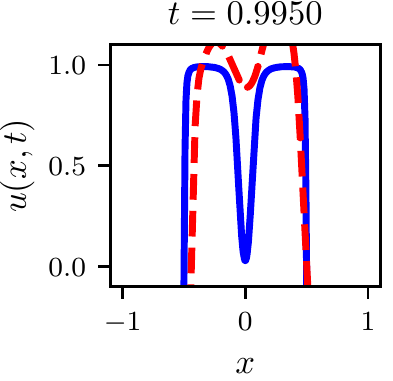}}\hfill
    \subfigure[PINN-D2]{\label{fig:ac_snap6}\includegraphics[height=2.5cm, width=2.5cm]{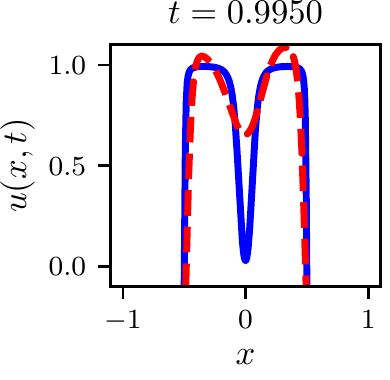}}
    \caption{Top two rows: the complete reference solution and predictions of the Allen--Cahn equation. The points marked with $\times$ mean initial or boundary points. Bottom: the extrapolation solution snapshots at  $t=\{0.815,0.995\}$.} 
    \label{fig:ac}
\end{figure}

\paragraph{Allen--Cahn equation (AC).} Fig.~\ref{fig:ac} reports the reference solutions of the AC equation and the predictions made by all the considered PINN variants. The solution snapshots shown in Figs.~\ref{fig:ac_snap1}--\ref{fig:ac_snap6} demonstrate that the proposed PNN-D2 produces the most accurate approximations to the reference solutions. In particular, the approximate solutions obtained by using PINN-D2 matches very closely with the reference solutions with the exception on the valley (around $x=0$), where all three methods struggle to make accurate predictions. Moreover, the approximate solutions of PINN-D2 are almost symmetric w.r.t. $x=0$, whereas the approximate solutions of the other two methods are significantly non-symmetric and the accuracy becomes even more degraded as $t$ increases.

%The Allen--Cahn equation is one of the most difficult PDEs to be approximated with neural networks in our experience due to its high nonlinearity. The difficult one to extrapolate in this PDE is the valley around $x=0$. At $t=0.815$ right after $\Tval$, PINN-R shows a reasonable extrapolation with a little error w.r.t. the scale of the signal. PINN-DPM's extrapolation outputs a signal with an almost perfect scale but an error in the valley. When $t=0.995$, however, PINN-R's extrapolation becomes relatively poor in comparison with others.

\begin{figure}[t]
    \centering
    \subfigure[Reference Solution]{\includegraphics[width=0.23\textwidth]{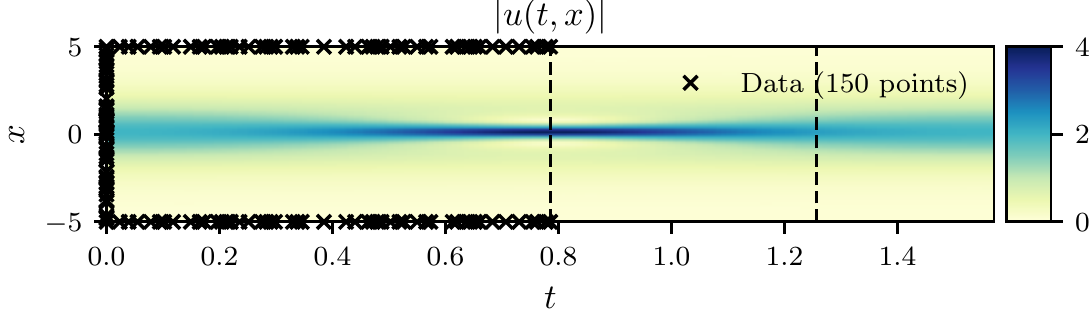}} \hfill
    \subfigure[PINN]{\includegraphics[width=0.23\textwidth]{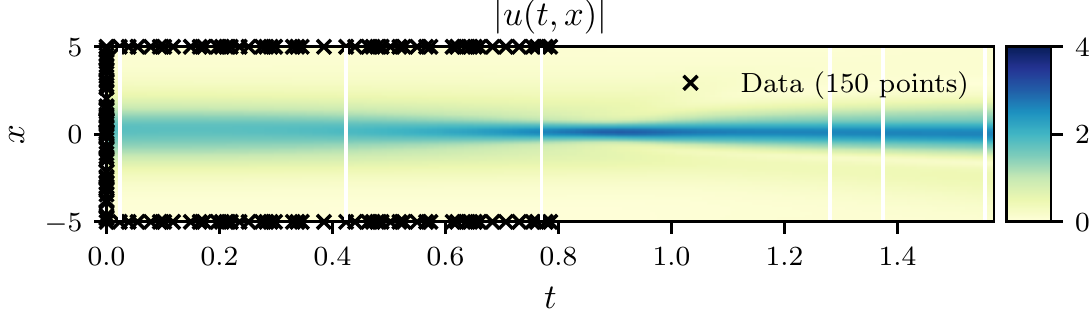}}\\
    \subfigure[PINN-R]{\includegraphics[width=0.23\textwidth]{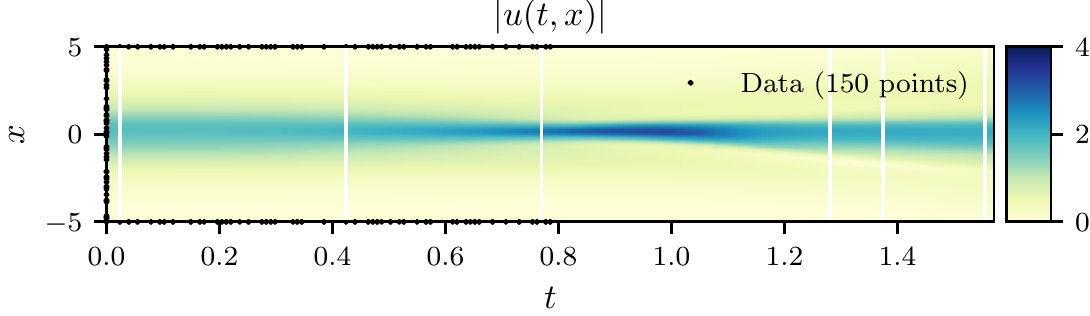}}\hfill
    \subfigure[PINN-D2]{\includegraphics[width=0.23\textwidth]{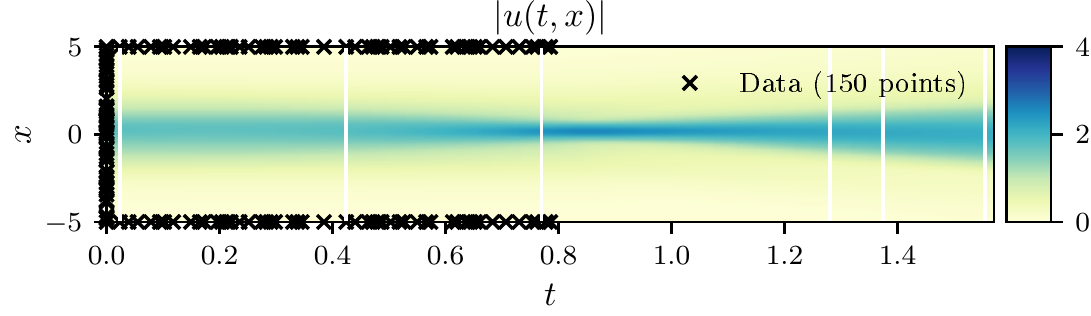}}\\
    \subfigure[PINN]{\includegraphics[height=2.5cm, width=2.5cm]{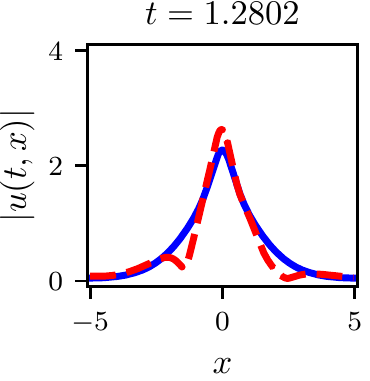}} \hfill
    \subfigure[PINN-R]{\includegraphics[height=2.5cm, width=2.5cm]{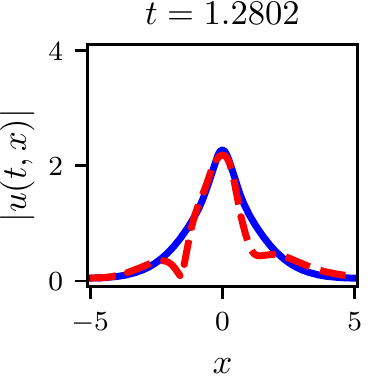}}\hfill
    \subfigure[PINN-D2]{\includegraphics[height=2.5cm, width=2.5cm]{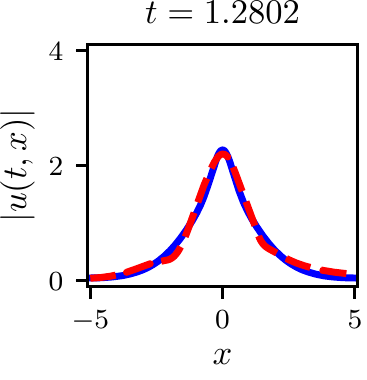}}\hfill
    \subfigure[PINN]{\includegraphics[height=2.5cm, width=2.5cm]{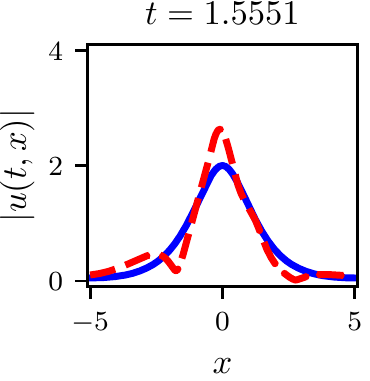}} \hfill
    \subfigure[PINN-R]{\includegraphics[height=2.5cm, width=2.5cm]{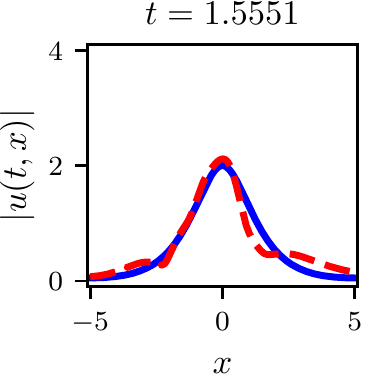}}\hfill
    \subfigure[PINN-D2]{\label{fig:sch_dpm}\includegraphics[height=2.5cm, width=2.5cm]{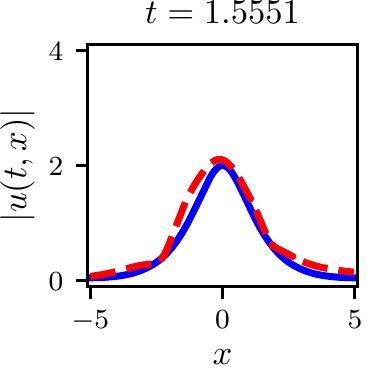}}
    \caption{Top two rows: the complete reference solution and predictions of the nonlinear Schr\"{o}dinger equation. The points marked with $\times$ mean initial or boundary points. Bottom: the extrapolation solution snapshots at $t=\{1.2802,1.5551\}$.} 
    \label{fig:sch}
\end{figure}

\paragraph{Nonlinear Schr\"{o}dinger equation (NLS).} 
Fig.~\ref{fig:sch} reports the reference solution of the NLS equation and the predictions made by all the considered PINN variants. Because the solution of the NLS equation is a complex-valued, the magnitudes of the reference solution $|\sol(x,t)|$ and the predictions $|\solNN(x,t)|$ are depicted. The solution snapshots produced by PINN and PINN-R exhibit errors around $x=-1$ and $x=1$ whereas PINN-D2 is accurate around the region. In particular, the predictions made by PINN and PINN-R exhibit the shapes that are very similar to previous time steps' solution snapshots, which indicates that the dynamics of the system is not learned accurately. In contrast, PINN-D2 seems to enforce the dynamics much better and produce more accurate predictions.
%The nonlinear Schr\"{o}dinger equation is also highly nonlinear especially around the peak signal at $x=0$. However, we think all the methods show reasonable extrapolation results. As noted in Table~\ref{tbl:all}, however, our PINN-DPM shows the best accuracy, i.e., 16\% smaller errors than those of PINN and PINN-R. Interestingly, both PINN and PINN-R show almost the same predictions with negligible differences.

\subsection{Ablation Study}
To show the efficacy of controlling $\delta$ in Eq.~\eqref{eq:cnt}, we compare PINN-D1 and PINN-D2. In Table~\ref{tbl:all}, PINN-D2 outperforms PINN-D1 for three benchmark equations. The biggest improvement is made in the NLS equation, one of the most difficult equations to predict, i.e., 0.314 vs. 0.141 in the L2-norm metric. We note that without controlling $\delta$, PINN-D1 shows worse predictions even than PINN and PINN-R in this equation.

\subsection{Visualization of Training Process}
Fig.~\ref{fig:pinn3} shows the curves of $L_u$ and $L_f$ with our method in the benchmark viscous Burgers' equation. For $L_f$, we set $\epsilon=0.001$, $\delta=0.01$, and $w=1.01$, which produces the best extrapolation accuracy. With this setting, DPM immediately pulls $\lossEqs$ toward the threshold $\epsilon=0.001$ as soon as $L_f > 0.001$. Because our method uses the smallest manipulation vector, $\bm{v}^*$, $L_u$ is also trained properly as training goes on.

\begin{figure}[t]
\centering
\subfigure[$\loss_u$ curve]{\includegraphics[height=3cm]{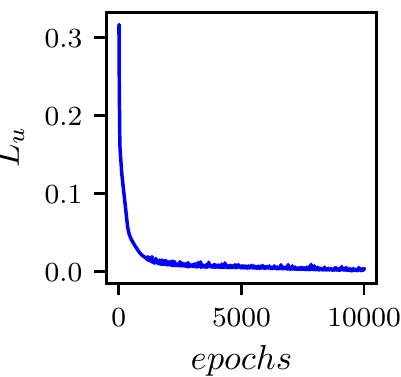}}\hspace{1cm}
\subfigure[$\loss_f$ curve]{\label{fig:losseqs2}\includegraphics[height=3cm]{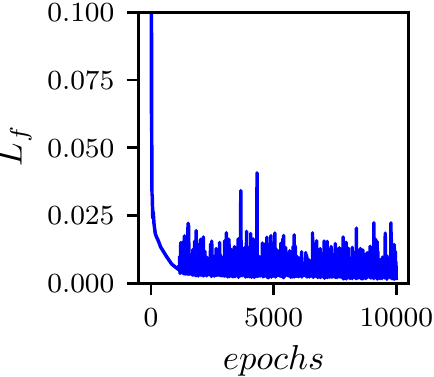}}
\caption{Example training curves of $\loss_u$ and $\loss_f$ of PINN-D2 ($\epsilon=0.001, \delta=0.01$, and $w=1.01$) for the benchmark viscous Burgers' equation}\label{fig:pinn3}
\end{figure}

\subsection{PINN vs. Regression}
The task definition of PINN is different from that of the simple regression learning a solution function $u$, where the reference solutions of $u(x,t)$ are collected not only for initial and boundary conditions but also for other $(x,t)$ pairs. In general, this approach requires non-trivial efforts to run computer simulations and collect such reference solutions. Once we collect them, one advantage is learning $u$ becomes a 
simple regression task without involving $L_f$. However, a critical disadvantage is that governing equations cannot be explicitly imposed during the training process.

Although our task is not to fit a regression model to the reference solutions but to learn physical dynamics, we compare our proposed method with the regression-based approach to better understand our method. To train the regression model, we use $L_u$ with an augmented training set $\big\{\big((x_\sol^i,t_\sol^i),\sol(x_\sol^i,t_\sol^i)\big)\big\}_{i=1}^{\nTrainSol} \cup \big\{\big((x_r^i,t_r^i),\sol(x_r^i,t_r^i)\big)\big\}_{i=1}^{N_r}$, where the first set consists of initial and boundary training samples, $(x_r^i,t_r^i)$ are sampled uniformly in $\SpatialDomain$ and $[0, \Ttrain]$, and we set $N_r=\nTrainGovrnEqs$ for fairness. 
We run external software to calculate $\sol(x_r^i,t_r^i)$, which is not needed for $\sol(x_\sol^i,t_\sol^i)$ because initial and boundary conditions are known a priori.

\begin{figure}[!t]
    \centering
    \subfigure[FC]{\includegraphics[height=2.5cm, width=2.5cm]{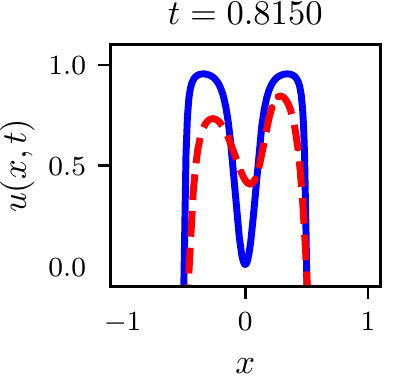}} \hfill
    \subfigure[FC]{\includegraphics[height=2.5cm, width=2.5cm]{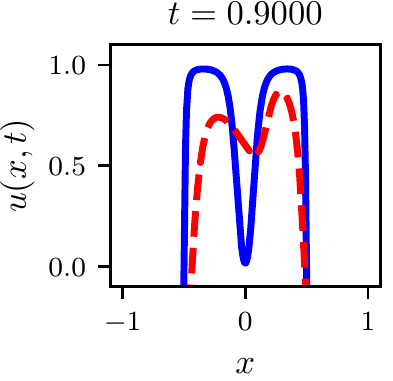}}\hfill
    \subfigure[FC]{\includegraphics[height=2.5cm, width=2.5cm]{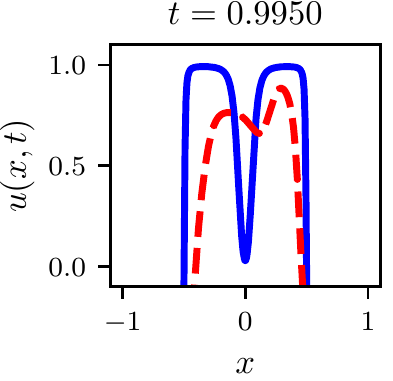}}\hfill
    \subfigure[FC-R]{\includegraphics[height=2.5cm, width=2.5cm]{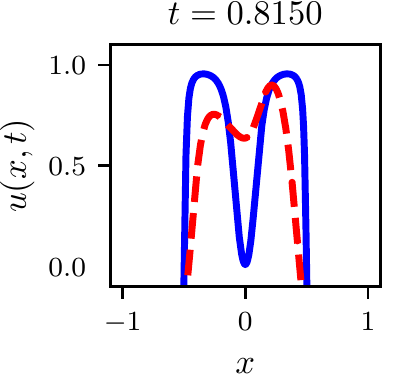}}\hfill 
    \subfigure[FC-R]{\includegraphics[height=2.5cm, width=2.5cm]{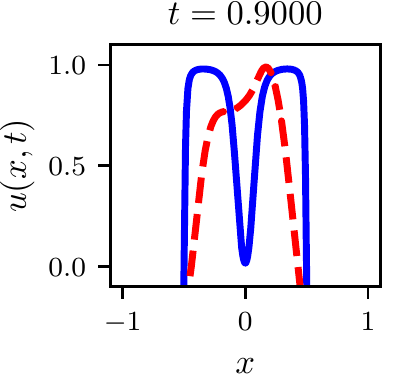}}\hfill
    \subfigure[FC-R]{\label{fig:reg_ac}\includegraphics[height=2.5cm, width=2.5cm]{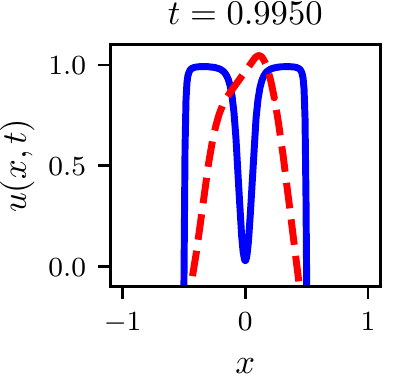}}\hfill
    \subfigure[FC]{\includegraphics[height=2.5cm, width=2.5cm]{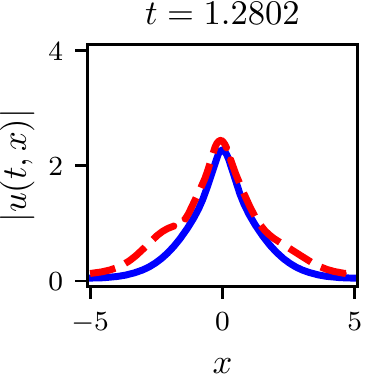}} \hfill
    \subfigure[FC]{\includegraphics[height=2.5cm, width=2.5cm]{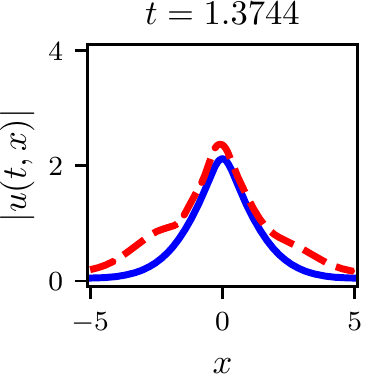}}\hfill
    \subfigure[FC]{\includegraphics[height=2.5cm, width=2.5cm]{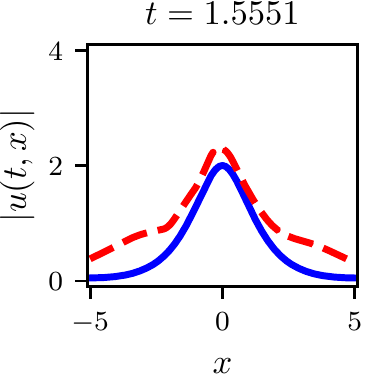}}\hfill
    \subfigure[FC-R]{\includegraphics[height=2.5cm, width=2.5cm]{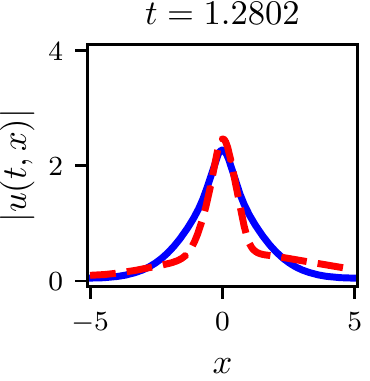}} \hfill
    \subfigure[FC-R]{\includegraphics[height=2.5cm, width=2.5cm]{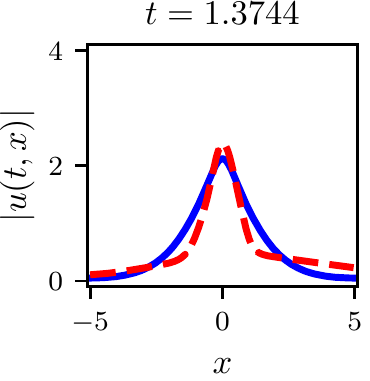}}\hfill
    \subfigure[FC-R]{\label{fig:reg_sch}\includegraphics[height=2.5cm, width=2.5cm]{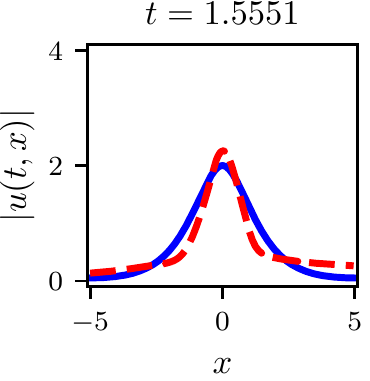}}
    \caption{We visualize the results by regression models. As shown, they are inferior to our PINN-D2 in Figs.~\ref{fig:ac} and~\ref{fig:sch}. Top two rows: the regression extrapolation snapshots for the Allen--Cahn equation. Bottom: the regression extrapolation snapshots for the nonlinear Schr\"{o}dinger equation.} 
    \label{fig:reg}
\end{figure}

\begin{table*}[!t]
\setlength{\tabcolsep}{2pt}
\centering
\scriptsize 
\caption{The extrapolation accuracy in terms of the relative errors in L2-norm, the Pearson correlation coefficient, and $R^2$ in various PDEs. Large (resp. small) values are preferred for $\uparrow$ (resp. $\downarrow$).}\label{tbl:reg}
\begin{tabular}{|c|c|c|c|c|c|c|c|c|c|c|c|c|c|c|c|c|}
\hline
\multirow{2}{*}{PDE} & \multicolumn{4}{c|}{L2-norm $(\downarrow)$} & \multicolumn{4}{c|}{Explained variance score $(\uparrow)$} & \multicolumn{4}{c|}{ Max error $(\downarrow)$} & \multicolumn{4}{c|}{ Mean absolute error $(\downarrow)$} \\ \cline{2-17}
          & FC & FC-R & PINN-D1 & PINN-D2 & FC & FC-R & PINN-D1 & PINN-D2 & FC & FC-R & PINN-D1 & PINN-D2 & FC & FC-R & PINN-D1 & PINN-D2 \\ \hline
Vis. Burgers & 0.352 & 0.301 & 0.112 & \textbf{0.092}  & 0.896 & 0.915 & 0.988 & \textbf{0.991} &
0.718 & 0.598 & 0.545 & \textbf{0.333} &
0.119 & 0.108 & 0.026 & \textbf{0.021} \\ \hline
Inv. Burgers & 0.114 & 0.133 & \textbf{0.083} & 0.090  & 0.060 & -0.181 & 0.454 & \textbf{0.621} &
3.245 & 3.301 & \textbf{1.534} & 2.036 &
0.255 & 0.332 & \textbf{0.277} & 0.315 \\ \hline
Allen--Cahn & 0.324 & 0.313 & 0.246 & \textbf{0.182} & 0.873 & 0.766 & 0.939 & \textbf{0.967} &
1.512 & 1.190 & 1.096 & \textbf{0.8366} & 
0.207 & 0.336 & 0.129 & \textbf{0.094} \\ \hline
Schr\"{o}dinger & 0.375 & 0.235  & 0.314 & \textbf{0.141} & -3.438 & -3.174 & -4.973 & \textbf{-3.257} & 4.078 & 4.3165 &  4.945 & \textbf{3.829} & 2.072 & 1.868 & \textbf{0.868} & 0.896 \\ \hline
\end{tabular}
\end{table*}

We train two regression models: one based on a series of fully connected (FC) layers and the other based on residual connections. In Table~\ref{tbl:reg}, they are denoted by FC and FC-R, respectively. We note that the neural network architecture of FC (resp. FC-R) is the same as that of PINN (resp. PINN-R) but they are trained in the supervised manner described earlier. We use the same set of hyperparameters for the number of layers, the dimensionality of hidden vector, and so on, and choose the best hyperparameter using the validation set. Note that this is exactly the same environment as the experiments shown in the main text.

% For others, surprisingly either FC or FC-R is better than ours. We think the regression models' successful extrapolations are possible in those two equations because we feed much information to train, i.e., $\frac{4}{5}$ of the entire time domain is directly and indirectly used for training. 

Our proposed PINN-D1 and D2 outperform the regression-based models for all benchmark problems by large margins in Table~\ref{tbl:reg}. In Fig.~\ref{fig:reg}, we show the extrapolation results of the two regression models for the worst and the best performing cases in terms of human visual perception. For the AC equation (the top row in the figure), it is hard to say that they learned the physical dynamics. In particular, FC-R shows the worst extrapolation in Fig.~\ref{fig:reg_ac}. On the other hand, FC-R is successful for the NLS equation (the bottom row in the figure), whereas FC fails to extrapolate the both ends of the curve. Therefore, we can say that the regression-based approach shows unstable performance and is not ``physics-informed.''

\section{Conclusions}
%Time-dependant PDEs are frequently used in many scientific and engineering applications. However, solving them causes non-trivial challenges in many cases. PINN is a pioneering work to approximate them with neural networks without external mathematical PDE solvers. 
In this work, we presented a novel training method, dynamic pulling method (DPM), for obtaining better performing physics-informed neural networks in extrapolation. The proposed DPM enables PINN to learn dynamics of the governing equations accurately. %which contains all the essential knowledge of physical dynamics. 
In the numerical experiments, we first demonstrated that the original PINN performs poorly on extrapolation tasks and empirically analyzed PINN in detail. Then, we demonstrated that the proposed DPM significantly outperforms PINN and its residual-block variant (up to 72\% in comparison with PINN and PINN-R) in various metrics. As an ablation study, we compared PINN-D1 and PINN-D2, where PINN-D2 overwhelms PINN-D1 in three benchmark problems. Finally, we explained how DPM behaves by illustrating example training loss curves. All codes and data will be released upon publication.

\clearpage
\section{Acknowledgements} \label{sec:ack}
This paper describes objective technical results and analysis. Any subjective
views or opinions that might be expressed in the paper do not necessarily
represent the views of the U.S. Department of Energy or the United States
Government. Sandia National Laboratories is a multimission laboratory managed
and operated by National Technology and Engineering Solutions of Sandia, a
wholly owned subsidiary of Honeywell International, for the U.S. Department of
Energy's National Nuclear Security Administration under contract DE-NA-0003525.
\bibliography{ref}
\bibliographystyle{aaai}

\clearpage
\appendix
\section{List of nonlinear PDEs}
Here we present a list of benchmark problems considered in this study. Following the original PINN paper, we consider 1D Burgers' equation, 1D nonlinear Schr\"{o}dinger equation, and Allen--Cahn equation. Here, $\sol_t \stackrel{\text{def}}{=} \frac{\partial u}{\partial t}$, $\sol_x \stackrel{\text{def}}{=} \frac{\partial u}{\partial x}$, and $\sol_{xx} \stackrel{\text{def}}{=} \frac{\partial^2 u}{\partial x^2}$.

\paragraph{1D viscous Burgers' equation.} As the first benchmark problem, we consider one-dimensional viscous Burgers' equation:
\begin{footnotesize}\begin{linenomath}\begin{equation}
    \governEqs(x,t) = \sol_t +\sol \sol_x - (0.01/pi) \sol_{xx}  =0,\; x\in[-1,1], \, t\in[0,1],
\end{equation}\end{linenomath}\end{footnotesize}
with the initial condition $\InitCond=-\sin(\pi x)$ and the boundary condition $\sol(t,-1)=\sol(t,1)=0, \forall t \in [0,1]$.

\paragraph{1D inviscid Burgers' equation.} Next, we consider one-dimensional inviscid Burgers' equation:
\begin{footnotesize}\begin{linenomath}\begin{equation}
    \governEqs(x,t) = \sol_t + \sol \sol_x - 0.02e^{0.015 x}=0,\; x \in [0,100],\, t \in [0, 35],
\end{equation}\end{linenomath}\end{footnotesize}
with the initial condition $\InitCond = 1,\forall x \in [0, 100],$ and the boundary condition $
\sol(0,t) = 4.25, \forall t \in [0, 35]$. This equation was not used in the original PINN paper but we added to test in more diverse environments.

\paragraph{1D nonlinear Schr\"{o}dinger (NLS) equation.} The third benchmark problem is one-dimensional nonlinear Schr\"{o}dinger equation, of which principal applications include the propagation of light in nonlinear optical fibers and planar waveguides:
\begin{footnotesize}\begin{linenomath}\begin{equation}
    \governEqs(x,t) = \sol_t - i0.5 \sol_{xx} - i |\sol|^2\sol,\; x \in [-5,5], \, t \in [0, \pi/2],
\end{equation}\end{linenomath}\end{footnotesize}
with the initial condition $\InitCond=2 \text{sech}(x), \forall x \in [-5,5]$, and the periodic boundary conditions $\sol(-5,t) = \sol(5,t)$ and $\sol_x(-5,t)=\sol_x(5,t), \forall t \in [0,\pi/2]$.

\paragraph{Allen--Cahn (AC) equation.} For the fourth benchmark problem, we consider a nonlinear reaction-diffusion problem, Allen--Cahn equation, which describes the process of phase separation in alloys:
\begin{footnotesize}\begin{linenomath}\begin{equation}
    \governEqs(x,t) = \sol_t - 0.0001 \sol_{xx} + 5\sol^3 - 5\sol=0, x \in [-1,1],\, t \in [0,1],
\end{equation}\end{linenomath}\end{footnotesize}
with the initial condition $\InitCond=x^2 \cos(\pi x), \forall x \in [-1,1]$, and the periodic boundary conditions $\sol(-1,t) = \sol(1,t)$ and $\sol_x(-1,t)=\sol_x(1,t), \forall t \in [0,1]$.

% \textbf{Korteweg--de Vries (KdV) equation.} As the fourth benchmark problem, we consider Korteweg--de Vries equation, which is a mathematical model of waves on shallow water surfaces:
% \begin{linenomath}\begin{equation}
%     \governEqs(x,t) = \sol_t + \sol \sol_{x} + 0.0025\sol=0, x \in [-1,1],\, t \in [0,1],
% \end{equation}\end{linenomath}
% with the initial condition $\InitCond=\cos(\pi x), \forall x \in [-1,1]$, and the periodic boundary conditions $\sol(-1,t) = \sol(1,t)$ and $\sol_x(-1,t)=\sol_x(1,t), \forall t \in [0,1]$.

For computing the reference solutions of the 1D inviscid Burgers' equation, we consider the formulation shown in  \cite{white2003trajectory}: we use finite volume discretization with 256 cells for the spatial discretization and the implicit backward Euler scheme for the temporal discretization.
Following \cite{raissi2019physics}, the reference solutions of the other two equations are collected by using the Chebfun package \cite{driscoll2014chebfun}. For the NLS equation, a spectral Fourier discretization with 256 modes and a fourth-order explicit Runge--Kutta temporal integrator with time-step $\pi/2 \cdot 10^{-6}$ are considered. For the AC equation, a spectral Fourier discretization with 512 modes and a fourth-order explicit Runge--Kutta temporal integrator with time-step $10^{-5}$ and $10^{-6}$ are used, respectively.

\section{How to calculate the optimal gradient manipulation vector}

Finding the smallest $\bm{v}$ that meets Eq.~\eqref{eq:prob} can be formulated as follows:
\begin{linenomath}\begin{align}\begin{split}\label{eq:prob2}
\argmin_{\bm{v}}\quad& \|\bm{v}\|^2_2,\\
\textrm{subject to}\quad& \bm{v}\cdot \bm{g}^{(k)}_{L_f} = -\bm{g}^{(k)}_{L} \cdot \bm{g}^{(k)}_{L_f} + \delta.
\end{split}\end{align}\end{linenomath}

While one can directly derive many solutions after rearranging the above equality constraint w.r.t. $\bm{v}$, we explicitly solve the minimization problem to ensure the smallest solution among all feasible solutions satisfying the constraint. Since the objective is a convex function and the constraint is an affine function, its optimal solution can be analytically found.

\begin{theorem}
The optimal manipulation vector is $\bm{v}^* = \frac{-\bm{g}^{(k)}_{L} \cdot \bm{g}^{(k)}_{L_f} + \delta}{\| \bm{g}^{(k)}_{L_f} \|^2_2} \bm{g}^{(k)}_{L_f}$.
\end{theorem}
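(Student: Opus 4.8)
The plan is to treat Eq.~\eqref{eq:prob2} as exactly what it is: a minimum-norm problem subject to a single scalar linear (affine) constraint, for which the minimizer is the component of any feasible point along the constraint normal $\bm{g}^{(k)}_{L_f}$. First I would abbreviate $c \stackrel{\text{def}}{=} -\bm{g}^{(k)}_{L} \cdot \bm{g}^{(k)}_{L_f} + \delta$, so that the feasible set is the hyperplane $\{\bm{v} : \bm{v}\cdot \bm{g}^{(k)}_{L_f} = c\}$, and record the non-degeneracy $\bm{g}^{(k)}_{L_f} \neq \bm{0}$: this holds whenever the third branch of Eq.~\eqref{eq:adaptive} is active (that branch requires $\bm{g}^{(k)}_{L_u}\cdot\bm{g}^{(k)}_{L_f} < 0$, which is impossible if $\bm{g}^{(k)}_{L_f}=\bm{0}$), and it is precisely what makes the hyperplane nonempty and the claimed $\bm{v}^*$ well defined, since its denominator $\|\bm{g}^{(k)}_{L_f}\|_2^2$ is then strictly positive.

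Next I would verify feasibility and optimality of $\bm{v}^*$. Feasibility is immediate: $\bm{v}^* \cdot \bm{g}^{(k)}_{L_f} = \tfrac{c}{\|\bm{g}^{(k)}_{L_f}\|_2^2}\,\bm{g}^{(k)}_{L_f}\cdot\bm{g}^{(k)}_{L_f} = c$. For optimality I would decompose an arbitrary feasible $\bm{v}$ orthogonally with respect to $\bm{g}^{(k)}_{L_f}$ as $\bm{v} = \lambda\,\bm{g}^{(k)}_{L_f} + \bm{w}$ with $\bm{w}\cdot\bm{g}^{(k)}_{L_f}=0$; the constraint then forces $\lambda = c/\|\bm{g}^{(k)}_{L_f}\|_2^2$ regardless of $\bm{w}$, while the Pythagorean identity gives $\|\bm{v}\|_2^2 = \lambda^2\|\bm{g}^{(k)}_{L_f}\|_2^2 + \|\bm{w}\|_2^2 \ge \lambda^2\|\bm{g}^{(k)}_{L_f}\|_2^2$, with equality iff $\bm{w}=\bm{0}$, i.e.\ iff $\bm{v}=\bm{v}^*$. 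Equivalent one-line routes I could use instead are a Lagrange-multiplier stationarity computation ($2\bm{v} = \mu\,\bm{g}^{(k)}_{L_f}$, then fix $\mu$ from the constraint) or Cauchy--Schwarz ($|c| = |\bm{v}\cdot\bm{g}^{(k)}_{L_f}| \le \|\bm{v}\|_2\,\|\bm{g}^{(k)}_{L_f}\|_2$, with equality iff $\bm{v}\parallel\bm{g}^{(k)}_{L_f}$), both of which land on the same minimizer with optimal norm $|c|/\|\bm{g}^{(k)}_{L_f}\|_2$.

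Finally, since the objective $\|\bm{v}\|_2^2$ is strictly convex and the feasible set is affine, the stationary point found above is the unique global minimizer — this is the convexity remark already made just before the theorem — so the argument closes. I do not anticipate any genuine obstacle here: the only point needing care is the non-degeneracy $\bm{g}^{(k)}_{L_f}\neq\bm{0}$, which is needed both for solvability of the constraint and for well-posedness of the stated formula, and once it is noted explicitly the remainder is the routine orthogonal-projection (or Lagrange) calculation sketched above.
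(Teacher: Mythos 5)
Your argument is correct, but your primary route differs from the paper's. The paper proves the theorem by forming the Lagrangian $G = \|\bm{v}\|_2^2 + \lambda\bigl(\bm{v}\cdot\bm{g}^{(k)}_{L_f} + \bm{g}^{(k)}_{L}\cdot\bm{g}^{(k)}_{L_f} - \delta\bigr)$, writing the stationarity system $2\bm{v}+\lambda\bm{g}^{(k)}_{L_f}=\bm{0}$ together with the constraint, and solving for $\lambda^*$ and $\bm{v}^*$ in closed form, relying on the convexity remark made just before the theorem to upgrade the stationary point to a global minimizer --- this is exactly the ``one-line Lagrange route'' you mention only as an aside. Your main argument instead decomposes an arbitrary feasible $\bm{v}$ orthogonally along $\bm{g}^{(k)}_{L_f}$ and invokes the Pythagorean identity, which buys you something the paper's computation does not state explicitly: global optimality and uniqueness fall out directly from the equality case $\bm{w}=\bm{0}$, with no appeal to convexity needed. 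You also add two points the paper leaves implicit --- a feasibility check of $\bm{v}^*$ and the non-degeneracy condition $\bm{g}^{(k)}_{L_f}\neq\bm{0}$ (justified from the third branch of Eq.~\eqref{eq:adaptive}), which is genuinely needed for the formula's denominator to be nonzero and is a worthwhile observation. Both proofs land on the same $\bm{v}^*$; yours is the more self-contained, the paper's the more mechanical.
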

\begin{proof}
Recall that $\bm{g}^{(k)}_{L}$ and $\bm{g}^{(k)}_{L_f}$ are constant vectors when deciding $\bm{v}$ and therefore, the above problem~\eqref{eq:prob2} is a convex minimization with an affine constraint, which can be efficiently solved using the method of Langrage multiplier. We first construct the following Lagrangian $G$
\begin{linenomath}\begin{align}\begin{split}\label{eq:prob3}
G =& \|\bm{v}\|^2_2 + \lambda \big( \bm{v}\cdot \bm{g}^{(k)}_{L_f} + \bm{g}^{(k)}_{L} \cdot \bm{g}^{(k)}_{L_f} - \delta\big),
\end{split}\end{align}\end{linenomath}where $\lambda \in \mathbb{R}$ is a Lagrange multiplier.

Then, we can define the following equation system from the stationary condition which says that the derivative of $G$ becomes zero at the optimal solution:
\begin{linenomath}\begin{align}\begin{split}\label{eq:prob4}
\frac{\partial G}{\partial \bm{v}} =& 2\bm{v} + \lambda\bm{g}^{(k)}_{L_f}=\bm{0},\\
\frac{\partial G}{\partial \lambda} =& \bm{v}\cdot \bm{g}^{(k)}_{L_f} + \bm{g}^{(k)}_{L} \cdot \bm{g}^{(k)}_{L_f} - \delta=0.
\end{split}\end{align}\end{linenomath}

From Eq.~\eqref{eq:prob4}, the optimal solution $\lambda^*$ and $\bm{v}^*$ can be analytically found as follows:
\begin{linenomath}\begin{align}\begin{split}\label{eq:prob5}
\lambda^* =& \frac{2 \left(\bm{g}^{(k)}_{L} \cdot \bm{g}^{(k)}_{L_f} - \delta\right)}{\| \bm{g}^{(k)}_{L_f} \|^2_2},\\
\bm{v}^* =& \frac{-\bm{g}^{(k)}_{L} \cdot \bm{g}^{(k)}_{L_f} + \delta}{\| \bm{g}^{(k)}_{L_f} \|^2_2} \bm{g}^{(k)}_{L_f}.
\end{split}\end{align}\end{linenomath}
Note that this can be also viewed as a special case of Moore--Penrose inverse (pseudoinverse).
\end{proof}

\section{Additional Extrapolation Snapshots}
We show additional snapshots of the extrapolation results in Figs.~\ref{fig:burgersext}--\ref{fig:schext}. Note that some of them are from the validation set.

\begin{figure*}[!t]
    \centering
    \subfigure[PINN]{\includegraphics[width=0.16\textwidth]{images/Viscous_1/Burgers-Adam_0.8300_graph_6-40-0.0010-1-7950.pdf}} 
    \subfigure[PINN-R]{\includegraphics[width=0.16\textwidth]{images/Viscous_1/Burgers-Adam_ResNet_0.8300_graph_6-40-0.0125-1-2800.pdf}}
    \subfigure[PINN-D2]{\includegraphics[width=0.16\textwidth]{images/Viscous_1/Burgers-Adam_con_limit_ResNe_d2_0.8300_graph_8-20-0.0050-1.pdf}}
    \subfigure[PINN]{\includegraphics[width=0.16\textwidth]{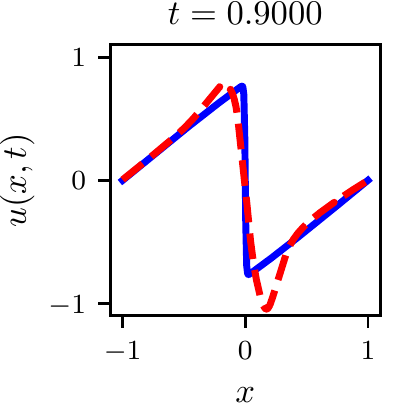}} 
    \subfigure[PINN-R]{\includegraphics[width=0.16\textwidth]{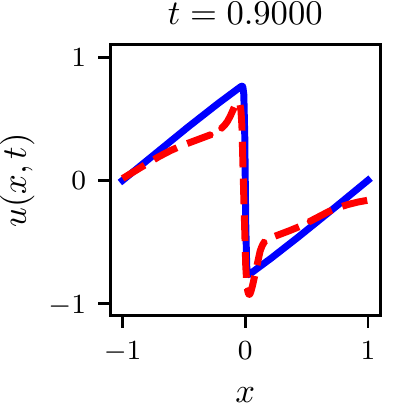}}
    \subfigure[PINN-D2]{\includegraphics[width=0.16\textwidth]{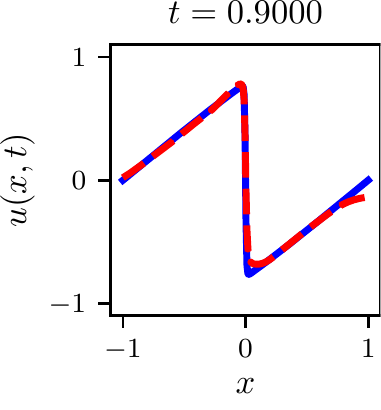}}
    \caption{The solution snapshots at various extrapolation $(x,t)$ pairs of the benchmark viscous Burgers' equation} 
    \label{fig:burgersext}

    \centering
    \subfigure[PINN]{\includegraphics[width=0.16\textwidth]{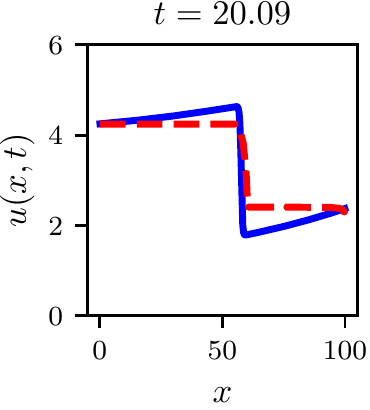}} 
    \subfigure[PINN-R]{\includegraphics[width=0.16\textwidth]{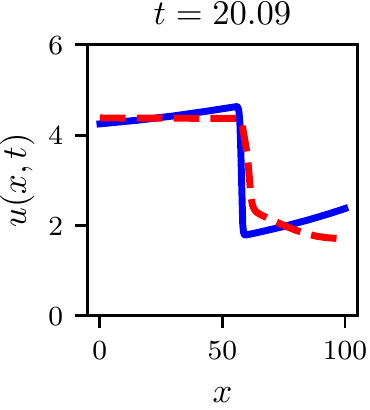}}
    \subfigure[PINN-D2]{\includegraphics[width=0.16\textwidth]{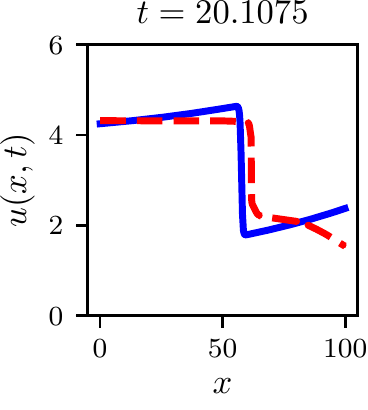}}
    \subfigure[PINN]{\includegraphics[width=0.16\textwidth]{images/Inviscid_1/Burgers-Adam_34.9125_graph_6-20-0.0125-1-11650.pdf}} 
    \subfigure[PINN-R]{\includegraphics[width=0.16\textwidth]{images/Inviscid_1/Burgers-Adam_ResNet_34.9125_graph_6-20-0.0125-1-6350.pdf}}
    \subfigure[PINN-D2]{\includegraphics[width=0.16\textwidth]{images/Inviscid_2/Burgers-Adam_con_limit_34.9125_graph_para.pdf}}
    \caption{The solution snapshots at various extrapolation $(x,t)$ pairs of the benchmark inviscid Burgers' equation} 
    \label{fig:burgers2ext}
    
    \centering
    \subfigure[PINN]{\includegraphics[width=0.16\textwidth]{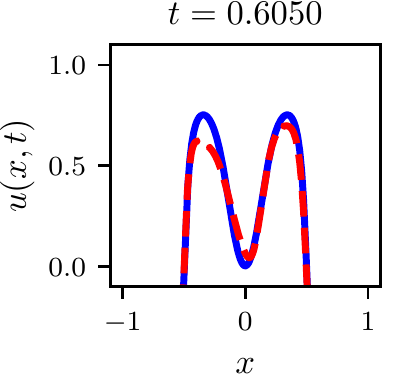}} 
    \subfigure[PINN-R]{\includegraphics[width=0.16\textwidth]{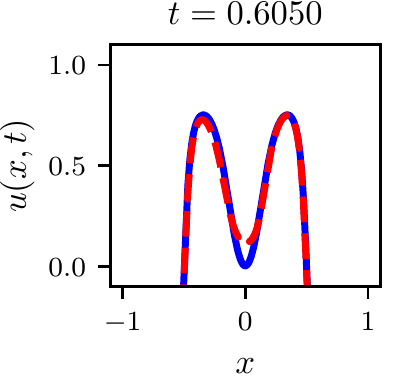}}
    \subfigure[PINN-D2]{\includegraphics[width=0.16\textwidth]{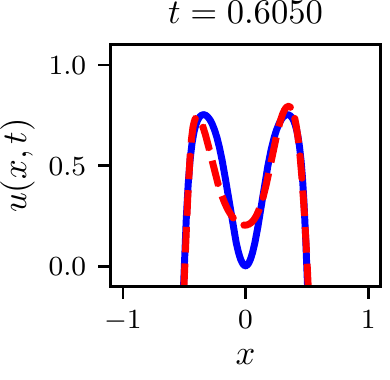}}
    \subfigure[PINN]{\includegraphics[width=0.16\textwidth]{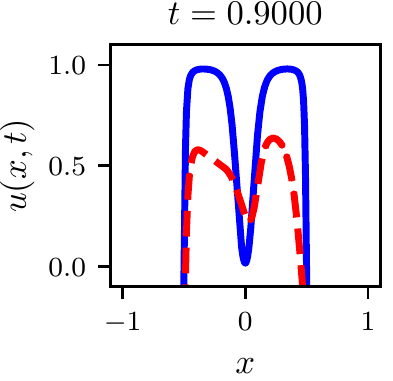}} 
    \subfigure[PINN-R]{\includegraphics[width=0.16\textwidth]{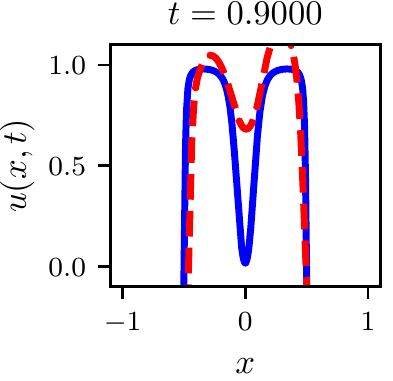}}
    \subfigure[PINN-D2]{\includegraphics[width=0.16\textwidth]{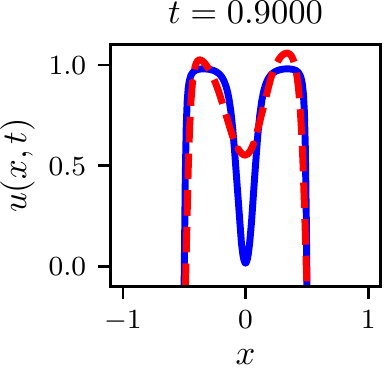}}
    \caption{The solution snapshots at various extrapolation $(x,t)$ pairs of the benchmark Allen--Cahn equation} 
    \label{fig:acext}
    
    \centering
    \subfigure[PINN]{\includegraphics[width=0.16\textwidth]{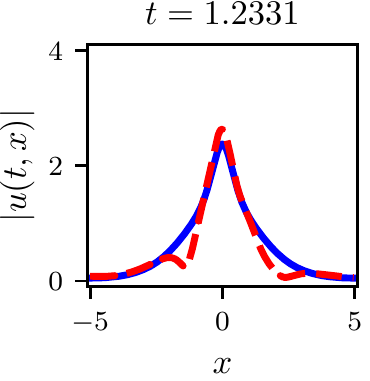}} 
    \subfigure[PINN-R]{\includegraphics[width=0.16\textwidth]{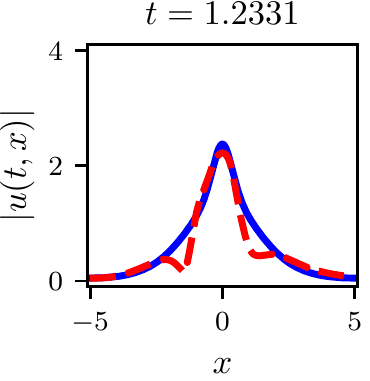}}
    \subfigure[PINN-D2]{\includegraphics[width=0.16\textwidth]{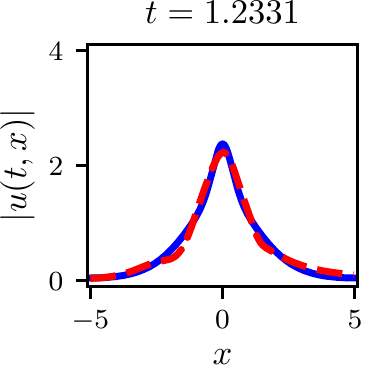}}
    \subfigure[PINN]{\includegraphics[width=0.16\textwidth]{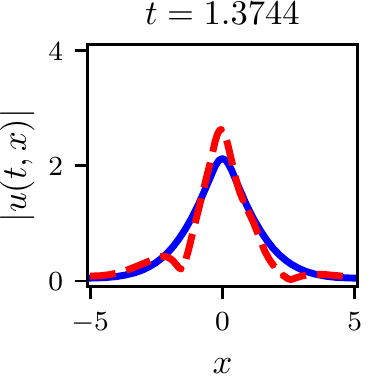}} 
    \subfigure[PINN-R]{\includegraphics[width=0.16\textwidth]{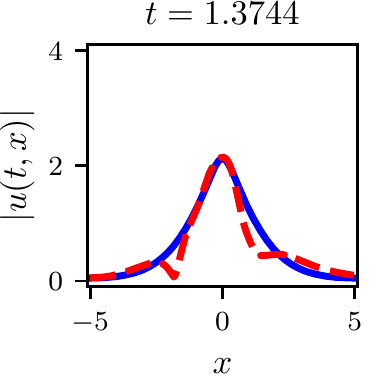}}
    \subfigure[PINN-D2]{\includegraphics[width=0.16\textwidth]{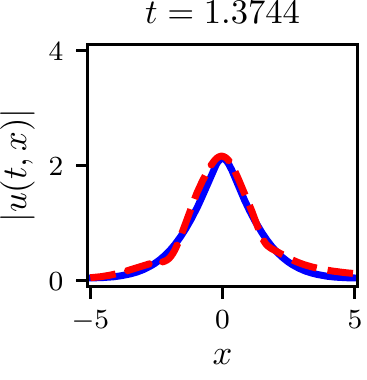}}
    \caption{The solution snapshots at various extrapolation $(x,t)$ pairs of the nonlinear Schr\"{o}dinger equation} 
    \label{fig:schext}
\end{figure*}

\section{Neural Network Architecture}
All of PINN, PINN-R, PINN-D1, and PINN-D2 use the following base layer:
\begin{align}
    \bm{h}_{i+1} = \sigma(\bm{W}^{\mathtt{T}}\bm{h}_i + \bm{b}),
\end{align}where $\sigma$ is the hyperbolic tangent, $\bm{h}_i$ is a hidden vector at $i$-th layer, and $\bm{W}, \bm{b}$ are trainable parameters. However, PINN-R, PINN-D1, and PINN-D2 are residual networks of the base layer. All hyperpameters are mentioned in the main paper.

\end{document}